\newtheorem{theorem}{Theorem}[section]
\newtheorem{assumption}{Assumption}[section]
\newtheorem{lemma}[theorem]{Lemma}
\newtheorem{definition}[theorem]{Definition}
\newcommand{\code}[1]{{\texttt {#1}}}
\newcommand{\R}{\mathbb{R}}
\newcommand{\D}{\mathcal{D}}
\renewcommand{\P}{\mathcal{P}}
\newcommand{\M}{\mathcal{M}}
\newcommand{\A}{\mathcal{A}}
\newcommand{\G}{\mathcal{G}}
\newcommand{\p}{\phi}
\newcommand{\traj}{\mathcal{Z}}
\newcommand{\safe}{\text{safe}}
\newcommand{\term}{\text{term}}
\newcommand{\choice}[2]{#1 \; \code{or} \; #2}
\newcommand{\reach}[1]{\code{reach} \; #1}
\newcommand{\avoid}[1]{\code{avoid} \; #1}
\newcommand{\semantics}[1]{{\llbracket #1 \rrbracket}}
\newcommand{\eventually}[1]{\code{achieve} \; #1}
\newcommand{\always}[1]{~ \code{ensuring} \; #1}
\newcommand{\true}{\code{true}}
\newcommand{\false}{\code{false}}
\newcommand{\toolname}{\textsc{Spectrl}\xspace}
\newcommand{\tltl}{\textsc{Tltl}\xspace}
\newcommand{\dirl}{\textsc{DiRL}\xspace}
\newcommand{\dirlnospace}{\textsc{DiRL}}
\title{Compositional Reinforcement Learning \\ from Logical Specifications}
\author{%
  Kishor Jothimurugan\\
  University of Pennsylvania\\
  \And
  Suguman Bansal \\
  University of Pennsylvania\\
  \AND
  Osbert Bastani \\
  University of Pennsylvania \\
  \And
  Rajeev Alur \\
  University of Pennsylvania \\
}
\begin{document}

\maketitle

\setlength{\textfloatsep}{12pt}

\begin{abstract}
We study the problem of learning control policies for complex tasks given by logical specifications. Recent approaches automatically generate a reward function from a given specification and use a suitable reinforcement learning algorithm to learn a policy that maximizes the expected reward. These approaches, however, scale poorly to complex tasks that require high-level planning. In this work, we develop a compositional learning approach, called \dirl, that interleaves high-level planning and reinforcement learning. First, \dirl encodes the specification as an abstract graph; intuitively, vertices and edges of the graph correspond to regions of the state space and simpler sub-tasks, respectively. Our approach then incorporates reinforcement learning to learn neural network policies for each edge (sub-task) within a Dijkstra-style planning algorithm to compute a high-level plan in the graph. An evaluation of the proposed approach on a set of challenging control benchmarks with continuous state and action spaces demonstrates that it outperforms state-of-the-art baselines.
\end{abstract}

\section{Introduction}\label{sec:intro}
Reinforcement learning (RL) is a promising approach to automatically learning control policies for continuous control tasks---e.g., for challenging tasks such as walking~\cite{collins2005efficient} and grasping~\cite{andrychowicz2020learning}, control of multi-agent systems~\cite{lowe2017multi,inala2021neurosymbolic}, and control from visual inputs~\cite{levine2016end}. A key challenge facing RL is the difficulty in specifying the goal. Typically, RL algorithms require the user to provide a reward function that encodes the desired task. However, for complex, long-horizon tasks, providing a suitable reward function can be a daunting task, requiring the user to manually compose rewards for individual subtasks. Poor reward functions can make it hard for the RL algorithm to achieve the goal; e.g., it can result in reward hacking~\cite{amodei2016concrete}, where the agent learns to optimize rewards without achieving the goal.

Recent work has proposed a number of high-level languages for specifying RL tasks~\cite{andreas2017modular,li2017reinforcement,spectrl,sun2020program, icarte2018using}. A key feature of these approaches is that they enable the user to specify tasks \emph{compositionally}---i.e., the user can independently specify a set of short-term subgoals, and then ask the robot to perform a complex task that involves achieving some of these subgoals. 
Existing approaches for learning from high-level specifications typically generate a reward function, which is then used by an off-the-shelf RL algorithm to learn a policy. Recent works based on Reward Machines \citep{icarte2018using, tor-etal-arxiv20} have proposed RL algorithms that exploit the structure of the specification to improve learning. However, these algorithms are based on model-free RL at both the high- and low-levels instead of model-based RL. Model-free RL has been shown to outperform model-based approaches on low-level control tasks~\cite{chebotar2017combining}; however, at the high-level, it is unable to exploit the large amount of available structure. Thus, these approaches scale poorly to long horizon tasks involving complex decision making.

We propose \dirl, a novel compositional RL algorithm that leverages the structure in the specification to decompose the policy synthesis problem into a high-level planning problem and a set of low-level control problems. Then, it interleaves model-based high-level planning with model-free RL to compute a policy that tries to maximize the probability of satisfying the specification. In more detail, our algorithm begins by converting the user-provided specification into an abstract graph whose edges encode the subtasks, and whose vertices encode regions of the state space where each subtask is considered achieved. Then, it uses a Djikstra-style forward graph search algorithm to compute a sequence of subtasks for achieving the specification, aiming to maximize the success probability. Rather than compute a policy to achieve each subtask beforehand, it constructs them on-the-fly for a subtask as soon as Djikstra's algorithm requires the cost of that subtask.

We empirically evaluate\footnote{Our implementation is available at \href{https://github.com/keyshor/dirl}{https://github.com/keyshor/dirl}.} our approach on a ``rooms environment'' (with continuous state and action spaces), where a 2D agent must navigate a set of rooms to achieve the specification, as well as a challenging ``fetch environment'' where the goal is to use a robot arm to manipulate a block to achieve the specification. We demonstrate that \dirl significantly outperforms state-of-the-art deep RL algorithms for learning policies from specifications, such as \toolname, \tltl, \textsc{Qrm} and \textsc{Hrm}, as well as a state-of-the-art hierarchical RL algorithm, \textsc{R-avi}, that uses state abstractions, as the complexity of the specification increases. In particular, by exploiting the structure of the specification to decouple high-level planning and low-level control, the sample complexity of \dirl scales roughly linearly in the size of the specification, whereas the baselines quickly degrade in performance. Our results demonstrate that \dirl is capable of learning to perform complex tasks in challenging continuous control environments. In summary, our contributions are as follows:
\begin{itemize}
\item A novel compositional algorithm to learn policies for continuous domains from complex high-level specifications that interleaves high-level model-based planning with low-level RL.
\item A theoretical analysis of our algorithm showing that it aims to maximize a lower bound on the satisfaction probability of the specification.
\item An empirical evaluation demonstrating that our algorithm outperforms several state-of-the-art algorithms for learning from high-level specifications.
\end{itemize}


\begin{figure*}
\centering
\includegraphics[width=0.2\linewidth]{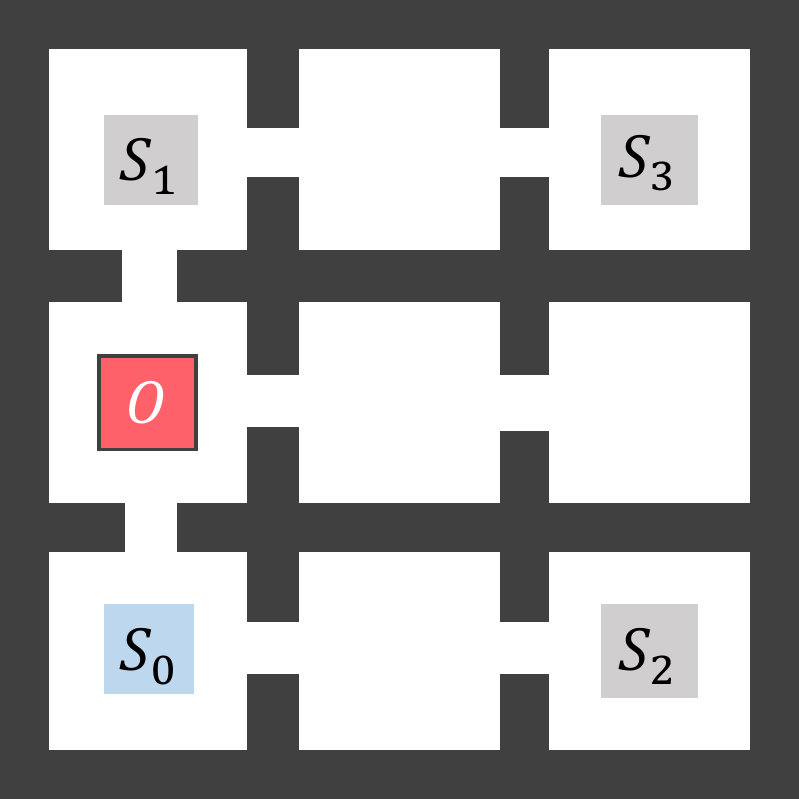}
\qquad
\includegraphics[width=0.35\linewidth]{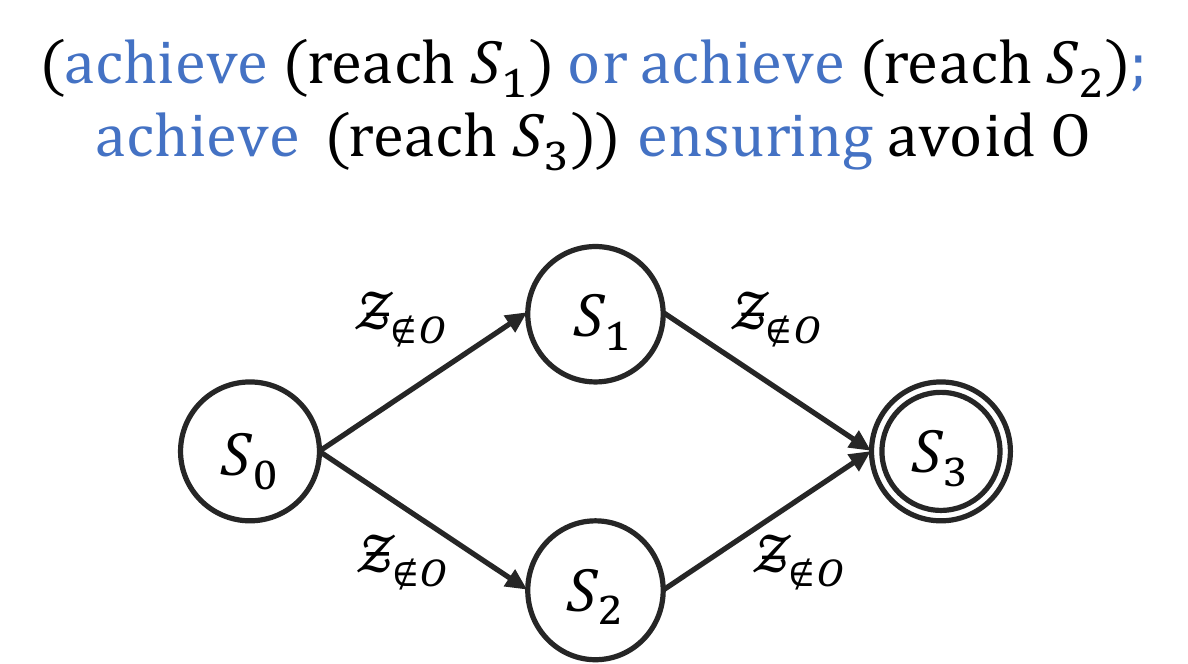}
\qquad
\includegraphics[width=0.31\linewidth]{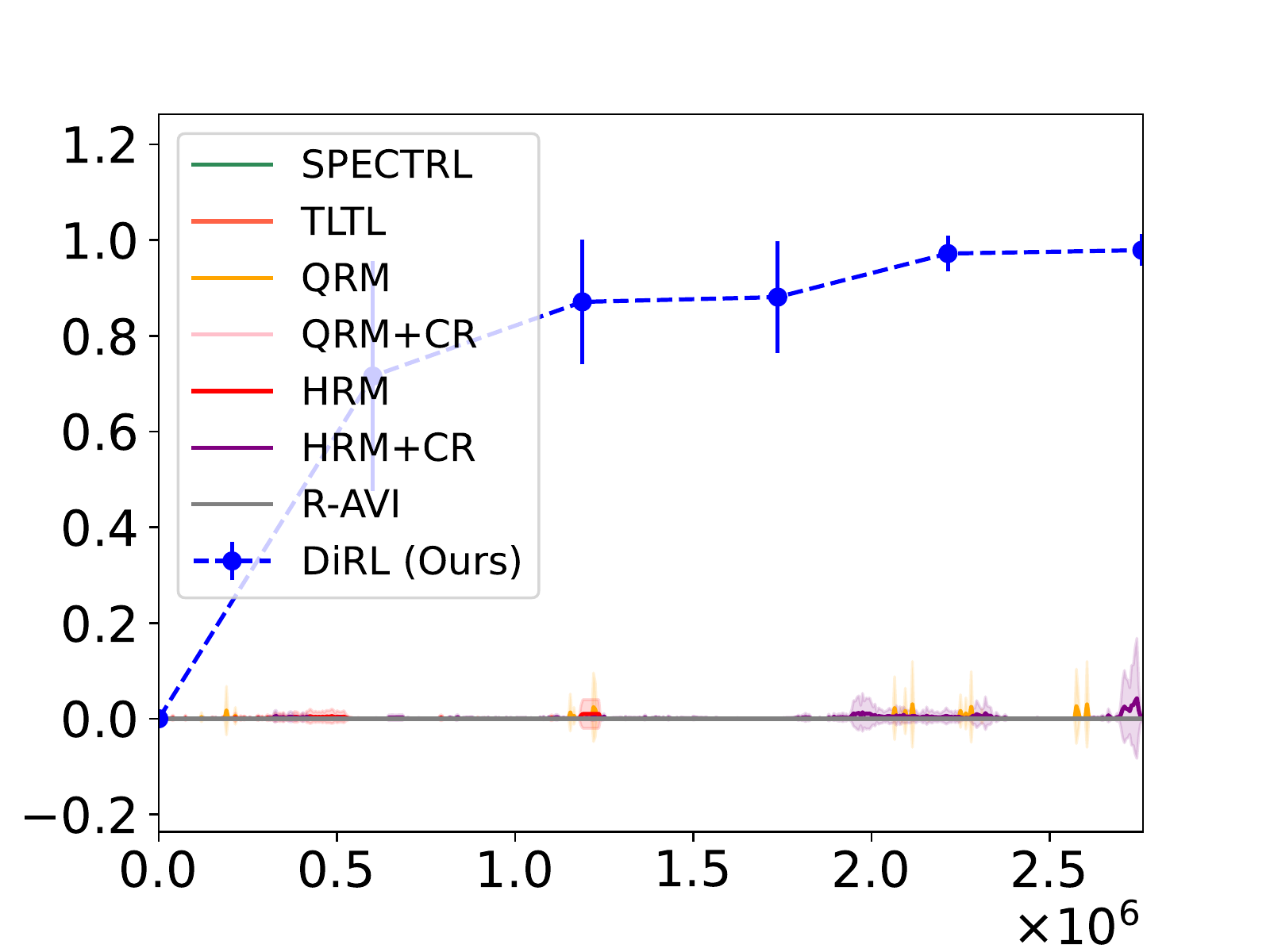}
\caption{Left: The 9-rooms environment, with initial region $S_0$ in the bottom-left, an obstacle $O$ in the middle-left, and three subgoal regions $S_1,S_2,S_3$ in the remaining corners. Middle top: A user-provided specification $\p_{\text{ex}}$. Middle bottom: The abstract graph $\G_{\text{ex}}$ \dirl constructs for $\p_{\text{ex}}$. Right: Learning curves for our approach and some baselines; $x$-axis is number of steps and $y$-axis is probability of achieving $\p_{\text{ex}}$.}
\label{fig:Motivate}
\end{figure*}

\textbf{Motivating example.}
Consider an RL-agent in the environment of interconnected rooms in \autoref{fig:Motivate}.
The agent is initially in the blue box, and their goal is to navigate to either the top-left room $S_1$ or the bottom-right room $S_2$, followed by the top-right room $S_3$, all the while avoiding the red block $O$.
This goal is formally captured by the \toolname specification $\p_{\text{ex}}$ (middle top). This specification is comprised of four simpler RL subtasks---namely, navigating between the corner rooms while avoiding the obstacle. Our approach, \dirl, leverages this structure to improve learning. First, based on the specification alone, it constructs the abstract graph $\G_{\text{ex}}$ 
(see middle bottom) whose vertices represent the initial region and the three subgoal regions, and the edges correspond to subtasks (labeled with a safety constraint that must be satisfied).

However, $\G_{\text{ex}}$ by itself is insufficient to determine the optimal path---e.g., it does not know that there is no path leading directly from $S_2$ to $S_3$, which is a property of the environment. These differences can be represented as (\emph{a priori} unknown) edge costs in $\G_{\text{ex}}$. At a high level, \dirl trains a policy $\pi_e$ for each edge $e$ in $\G_{\text{ex}}$, and sets the cost of $e$ to be $c(e;\pi_e)=-\log P(e;\pi_e)$, where $P(e;\pi_e)$ is the probability that $\pi_e$ succeeds in achieving $e$. For instance, for the edge $S_0\to S_1$, $\pi_e$ is trained to reach $S_1$ from a random state in $S_0$ while avoiding $O$. Then, a na\"{i}ve strategy for identifying the optimal path is to (i) train a policy $\pi_e$ for each edge $e$, (ii) use it to estimate the edge cost $c(e;\pi_e)$, and (iii) run Djikstra's algorithm with these costs.



One challenge is that $\pi_e$ depends on the initial states used in its training---e.g., training $\pi_e$ for $e=S_1\to S_3$ requires a distribution over $S_1$. Using the wrong distribution can lead to poor performance due to distribution shift; furthermore, training a policy for all edges may unnecessarily waste effort training policies for unimportant edges. To address these challenges, \dirl interweaves training policies with the execution of Djikstra's algorithm, only training $\pi_e$ once Djikstra's algorithm requires the cost of edge $e$. This strategy enables \dirl to scale to complex tasks; in our example, it quickly learns a policy that satisfies the specification with high probability. These design choices are validated empirically---as shown in \autoref{fig:Motivate}, \dirl quickly learns to achieve the specification, whereas it is beyond the reach of existing approaches.


\textbf{Related Work.}
There has been recent work on using specifications based on temporal logic for specifying RL tasks \cite{aksaray2016q, brafman2018ltlf,de2019foundations, hasanbeig2018logically, littman2017environmentindependent, hasanbeig2019, yuan2019modular, moritz2019, ijcai2019-0557, jiang2020temporallogicbased}. These approaches typically generate a (usually sparse) reward function from a given specification which is then used by an off-the-shelf RL algorithm to learn a policy. In particular, \citet{li2017reinforcement} propose a variant of Linear Temporal Logic (LTL) called \tltl to specify tasks for robots, and then derive shaped (continuous) rewards from specifications in this language. \citet{spectrl} propose a specification language called \toolname that allows users to encode complex tasks involving sequences, disjunctions, and conjunctions of subtasks, as well as specify safety properties; then, given a specification, they construct a finite state machine called a \emph{task monitor} that is used to obtain shaped (continuous) rewards. \citet{icarte2018using} propose an automaton based model called \emph{reward machines} (RM) for high-level task specification and decomposition as well as an RL algorithm (\textsc{Qrm}) that exploits this structure. In a later paper \cite{tor-etal-arxiv20}, they propose variants of \textsc{Qrm} including an hierarchical RL algorithm (\textsc{Hrm}) to learn policies for tasks specified using RMs.  \citet{ijcai2019ltl} show that one can generate RMs from temporal specifications but RMs generated this way lead to sparse rewards. \citet{Kuo2020EncodingFA} and \citet{vaezipoor2021ltl2action} propose frameworks for multitask learning using LTL specifications but such approaches require a lot of samples even for relatively simpler environments and tasks. There has also been recent work on using temporal logic specifications for multi-agent RL \cite{hammond2021, neary2021reward}.


More broadly, there has been work on using \emph{policy sketches}~\cite{andreas2017modular}, which are sequences of subtasks designed to achieve the goal. They show that such approaches can speed up learning for long-horizon tasks. \citet{sun2020program} show that providing semantics to the subtasks (e.g., encode rewards that describe when the subtask has been achieved) can further speed up learning. There has also been recent interest in combining high-level planning with reinforcement learning~\cite{abel2020value,jothimurugan2021abstract,Eysenbach2019SearchOT}. These approaches all target MDPs with reward functions, whereas we target MDPs with logical task specifications. Furthermore, in our setting, the high-level structure is derived from the given specification, whereas in existing approaches it is manually provided.  \citet{Illanes_Yan_ToroIcarte_McIlraith_2020} propose an RL algorithm for reachability tasks that uses high-level planning to guide low-level RL; however, unlike our approach, they assume that a high-level model is given and high-level planning is not guided by the learned low-level policies. Finally, there has been recent work on applying formal reasoning for extracting interpretable policies \cite{pmlr-v80-verma18a, verma2019imitation, Inala2020SynthesizingPP} as well as for safe reinforcement learning \cite{anderson2020neurosymbolic, junges2016safety}.


\section{Problem Formulation}\label{sec:prelim}

\textbf{MDP.}
We consider a  \emph{Markov decision process (MDP)} $\M = (S, A, P, \eta)$ with continuous states $S \subseteq \R^n$, continuous actions $A \subseteq \R^m$, transitions $P(s,a,s') = p(s'\mid s,a)\in\R_{\geq 0}$ (i.e., the probability density of transitioning from state $s$ to state $s'$ upon taking action $a$), and initial states $\eta: S \rightarrow \R_{\geq 0}$ (i.e., $\eta(s)$ is the probability density of the initial state being $s$). A \emph{trajectory} $\zeta\in\traj$ is either an infinite sequence $\zeta = s_0\xrightarrow{a_0}s_1\xrightarrow{a_1}\cdots$ or a finite sequence $\zeta=s_0\xrightarrow{a_0}\cdots\xrightarrow{a_{t-1}} s_t$ where $s_i \in S$ and $a_i \in A$. A subtrajectory of $\zeta$ is a subsequence $\zeta_{\ell:k} = s_\ell\xrightarrow{a_\ell}\cdots\xrightarrow{a_{k-1}} s_k$.
We let $\traj_f$ denote the set of finite trajectories.
A (deterministic) \emph{policy}  $\pi:\traj_f \to A$ maps a finite trajectory to a fixed action. Given $\pi$, we can sample a trajectory by sampling an initial state $s_0\sim\eta(\cdot)$, and then iteratively taking the action $a_i=\pi(\zeta_{0:i})$ and sampling a next state $s_{i+1}\sim p(\cdot\mid s_i,a_i)$.

\textbf{Specification language.}
We consider the specification language \toolname for specifying reinforcement learning tasks~\cite{spectrl}. A specification $\phi$ in this language is a logical formula over trajectories that indicates whether a given trajectory $\zeta$ successfully accomplishes the desired task. As described below, it can be interpreted as a function $\phi:\traj\to\mathbb{B}$, where $\mathbb{B}=\{\true,\false\}$.

Formally, a specification is defined over a set of \emph{atomic predicates} ${\P}_0$, where every $p \in {\P}_0$ is associated with a function $\semantics{p}:S\to\mathbb{B}$; we say a state $s$ \emph{satisfies} $p$ (denoted $s\models p$) if and only if $\semantics{p}(s)=\true$. 
For example, given a state $s\in S$, the atomic predicate
$
\semantics{\reach s}(s') ~=~ \big(\|s' - s\| < 1\big)
$
indicates whether the system is in a state close to $s$ with respect to the norm $\|\cdot\|$.
%
The set of \emph{predicates} $\mathcal{P}$ consists of conjunctions and disjunctions of atomic predicates. The syntax of a predicate $b\in\mathcal{P}$ is given by the grammar
$
b ~::=~ p \mid (b_1 \wedge b_2) \mid (b_1 \vee b_2),
$
where $p\in\mathcal{P}_0$. Similar to atomic predicates, each predicate $b\in\mathcal{P}$ corresponds to a function $\semantics{b}:S\to\mathbb{B}$ defined naturally over Boolean logic. 
Finally, the syntax of \toolname specifications is given by
\footnote{Here, \code{achieve} and \code{ensuring} correspond to the ``eventually'' and ``always'' operators in temporal logic.}
\begin{align*}
\p ~::=~ \eventually{b} \mid \p_1 \always{b} \mid \p_1; \p_2 \mid \choice{\p_1}{\p_2},
\end{align*}
where $b\in\mathcal{P}$. In this case, each specification $\phi$ corresponds to a function $\semantics{\phi}:\traj\to\mathbb{B}$, and we say $\zeta\in\traj$ satisfies $\phi$ (denoted $\zeta\models\phi$) if and only if $\semantics{\phi}(\zeta)=\true$.
Letting $\zeta$ be a finite trajectory of length $t$, this function is defined by
\begin{align*}
\zeta&\models\eventually{b} ~&&\text{if}~ \exists\ i \leq t,~s_i\models b \\
\zeta&\models \p \always{b} ~&&\text{if}~ \zeta\models\p ~\text{and}~ \forall\ i\leq t, ~ s_i\models b \\
\zeta&\models\p_1; \p_2 ~&&\text{if}~ \exists\ i < t, ~\zeta_{0:i}\models \p_1 ~\text{and}~ \zeta_{i+1:t}\models\p_2 \\
\zeta&\models\choice{\p_1}{\p_2} ~&&\text{if}~ \zeta\models\p_1 ~\text{or}~ \zeta\models\p_2.
\end{align*}
Intuitively, the first clause means that the trajectory should eventually reach a state that satisfies the predicate $b$. The second clause says that the trajectory should satisfy specification $\p$ while always staying in states that satisfy $b$. The third clause says that the trajectory should sequentially satisfy $\p_1$ followed by $\p_2$. The fourth clause means that the trajectory should satisfy either $\p_1$ or $\p_2$. An infinite trajectory $\zeta$ satisfies $\p$ if there is a $t$ such that the prefix $\zeta_{0:t}$ satisfies $\p$. 

{We assume that we are able to evaluate $\semantics{p}(s)$ for any atomic predicate $p$ and any state $s$. This is a common assumption in the literature on learning from specifications, and is necessary to interpret a given specification $\p$.}

\textbf{Learning from Specifications.}
Given an MDP $\M$ with unknown transitions and a specification $\p$, our goal is to compute a policy $\pi^*:\traj_f\rightarrow\A$  such that
$
\pi^* \in \operatorname*{\arg\max}_{\pi} \Pr_{\zeta\sim\D_{\pi}}[\zeta\models\p],
$
where $\D_{\pi}$ is the distribution over infinite trajectories generated by $\pi$. In other words, we want to learn a policy $\pi^*$ that maximizes the probability that a generated trajectory $\zeta$ satisfies the specification $\p$. 


%

We consider the reinforcement learning setting in which we do not know the probabilities $P$ but instead only have access to a simulator of $\M$. Typically, we can only sample trajectories of $\M$ starting at an initial state $s_0\sim\eta$. 
Some parts of our algorithm are based on an assumption that we can sample trajectories starting at any state that has been observed before. For example, if taking action $a_0$ in $s_0$ leads to a state $s_1$, we can store $s_1$ and obtain future samples starting at $s_1$.
\begin{assumption}\label{assump:model}
We can sample $p(\cdot\mid s,a)$ for any previously observed state $s$ and any action $a$.
\end{assumption}


\section{Abstract Reachability}
\label{sec:reach}

In this section, we describe how to reduce the RL problem for a given MDP $\M$ and specification $\p$ to a reachability problem on a directed acyclic graph (DAG) $\G_\p$, augmented with information connecting its edges to subtrajectories in $\M$. In Section~\ref{sec:algo}, we describe how to exploit the compositional structure of $\G_\p$ to learn efficiently.

\subsection{Abstract Reachability}

We begin by defining the \emph{abstract reachability} problem, and describe how to reduce the problem of learning from a $\toolname$ specification to abstract reachability. At a high level, abstract reachability is defined as a graph reachability problem over a directed acyclic graph (DAG) whose vertices correspond to \emph{subgoal regions}---a {subgoal region} $X\subseteq S$ is a subset of the state space $S$.
As discussed below, in our reduction, these subgoal regions are derived from the given specification $\phi$. The constructed graph structure also encodes the relationships between subgoal regions.
\begin{definition}
\rm
An {\em abstract graph} $\G = (U,E,u_0,F,\beta,\traj_{\safe})$ is a directed acyclic graph (DAG) with vertices $U$,
(directed) edges $E\subseteq U\times U$, initial vertex $u_0\in U$, final vertices $F\subseteq U$, subgoal region map $\beta:U\to2^S$ such that for each $u\in U$, $\beta(u)$ is a subgoal region,\footnote{We do not require that the subgoal regions partition the state space or that they be non-overlapping.} and \emph{safe trajectories}
$
\traj_\safe = \bigcup_{e \in E}\traj_\safe^e,
$
where $\traj_\safe^e\subseteq\traj_f$ denotes the safe trajectories for edge $e \in E$.
\end{definition}
Intuitively, $(U,E)$ is a standard DAG, and $u_0$ and $F$ define a graph reachability problem for $(U,E)$. Furthermore, $\beta$ and $\traj_{\safe}$ connect $(U,E)$ back to the original MDP $\M$; in particular, for an edge $e=u\to u'$, $\traj_{\safe}^e$ is the set of trajectories in $\M$ that can be used to transition from $\beta(u)$ to $\beta(u')$.
\begin{definition}
\rm
An infinite trajectory $\zeta=s_0\xrightarrow{a_0}s_1\xrightarrow{a_1}\cdots$ in $\M$ satisfies \emph{abstract reachability} for $\G$ (denoted $\zeta\models \G$) if there is a sequence of indices $0=i_0\leq i_1<\cdots<i_k$ and a path $\rho=u_0\to u_1\to\cdots\to u_k$ in $\G$ such that
\begin{itemize}[topsep=0pt,itemsep=0ex,partopsep=1ex,parsep=1ex]
\item $u_k\in F$,
\item for all $j\in\{0,\ldots,k\}$, we have $s_{i_j}\in \beta(u_j)$, and
\item for all $j < k$, letting $e_j=u_j\to u_{j+1}$, we have $\zeta_{i_j:i_{j+1}}\in\traj_{\safe}^{e_j}$.
\end{itemize}
\end{definition}
The first two conditions state that the trajectory should visit a sequence of subgoal regions corresponding to a path from the initial vertex to some final vertex, and the last condition states that the trajectory should be composed of subtrajectories that are safe according to $\traj_\safe$.
\begin{definition}
\rm
Given MDP $\M$ with unknown transitions and abstract graph $\G$, the \emph{abstract reachability problem} is to compute a policy $\tilde{\pi}:\traj_f\to A$ such that 
$
\tilde{\pi} \in \operatorname*{\arg\max}_{\pi}\Pr_{\zeta\sim\D_{\pi}}[\zeta\models \G].
$
\end{definition}
In other words, the goal is to find a policy for which the probability that a generated trajectory satisfies abstract reachability is maximized.

\subsection{Reduction to Abstract Reachability}

Next, we describe how to reduce the RL problem for a given MDP $\M$ and a specification $\p$ to an abstract reachability problem for $\M$ by constructing an abstract graph $\G_\p$ inductively from $\p$. We give a high-level description here, and provide details in Appendix~\ref{sec:reduction} in the supplement.

\begin{figure}
\begin{center}
\includegraphics[width=0.3\textwidth]{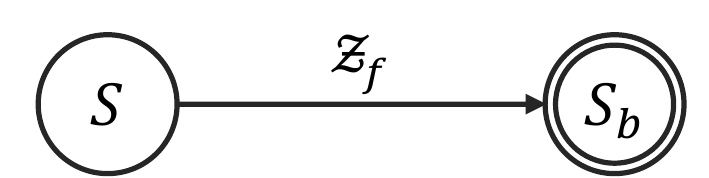}
\end{center}
\caption{Abstract graph for $\eventually{b}$.}
\label{fig:ev_graph}
\end{figure}

First, for each predicate $b$, we define the corresponding subgoal region
$
S_b=\{s\in S\mid s\models b\}
$
denoting the set of states at which $b$ holds.
Next, the abstract graph $\G_\p$ for $\p = \eventually{b}$ is shown in Figure~\ref{fig:ev_graph}. All trajectories in $\traj_f$ are considered safe for the edge $e=u_0\to u_1$ and the only final vertex is $u_1$ with $\beta(u_1) = S_b$. The abstract graph for a specification of the form $\p = \p_1\always{b}$ is obtained by taking the graph $\G_{\p_1}$ and replacing the set of safe trajectories 
$\traj_{\safe}^e$, for each $e\in E$, with the set $\traj_{\safe}^e\cap\traj_{b}$, where
$
\traj_{b} = \{\zeta\in\traj_f\mid \forall i\;.\; s_i\models b\}
$
is the set of trajectories in which all states satisfy $b$.
For the sequential specification $\p = \p_1;\p_2$, we construct $\G_\p$ by adding edges from every final vertex of $\G_{\p_1}$ to every vertex of $\G_{\p_2}$ that is a neighbor of its initial vertex. Finally, choice $\p=\p_1\;\code{or}\;\p_2$ is handled by merging the initial vertices of the graphs corresponding to the two sub-specifications. 
\autoref{fig:Motivate} shows an example abstract graph. The labels on the vertices are regions in the environment. All trajectories that avoid hitting the obstacle $O$
are safe for all edges. We have the following key guarantee:
\begin{theorem}\label{thm:reduction}
Given a $\toolname$ specification $\p$, we can construct an abstract graph $\G_\p$ such that, for every infinite trajectory $\zeta \in \traj$, we have $\zeta\models\p$ if and only if $\zeta\models\G_\p$. Furthermore, the number of vertices in $\G_\p$ is $O(|\p|)$ where $|\p|$ is the size of the specification $\p$.
\end{theorem}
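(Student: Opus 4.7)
The plan is to prove both claims simultaneously by structural induction on the specification $\p$, following the inductive construction of $\G_\p$ sketched just before the theorem. Specifically, I would establish an inductive invariant that ties witnesses for $\zeta \models \p$ to witnesses for $\zeta \models \G_\p$: every decomposition of $\zeta$ certifying $\zeta \models \p$ yields a sequence of indices $0 = i_0 \le i_1 < \cdots < i_k$ and a path $u_0 \to \cdots \to u_k$ in $\G_\p$ ending in $F$ that satisfies the three bullet points in the definition of abstract reachability, and conversely.

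For the base case $\p = \eventually{b}$, the graph has vertices $\{u_0, u_1\}$ with $\beta(u_1) = S_b$ and all finite trajectories safe on the single edge, so both sides just assert that some prefix ends in $S_b$. For $\p = \p_1 \always{b}$, the graph $\G_\p$ has the same shape as $\G_{\p_1}$ but with each $\traj_\safe^e$ replaced by $\traj_\safe^e \cap \traj_b$; if $\zeta$ satisfies $\p_1$ along a witness path and additionally every state $s_i$ (for $i$ up to the last witness index) lies in $S_b$, then every subtrajectory $\zeta_{i_j : i_{j+1}}$ lies in $\traj_b$, and conversely a witness under the strengthened edges packs together into a witness for $\p_1 \always{b}$. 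For $\p = \choice{\p_1}{\p_2}$, merging the initial vertices is sound because the $\beta$-label on the shared initial vertex can be taken to be $S$ (no state constraint at the start), so any witness path in $\G_\p$ immediately restricts to a witness in one of $\G_{\p_1}, \G_{\p_2}$, and the converse is immediate. For $\p = \p_1 ; \p_2$, I would split a witness $\zeta \models \p_1 ; \p_2$ at the split index $i$ from the semantics, apply the inductive hypothesis to obtain witness paths in $\G_{\p_1}$ and $\G_{\p_2}$, and then concatenate them across the new edges linking final vertices of $\G_{\p_1}$ to neighbors of $\G_{\p_2}$'s initial vertex; conversely a witness in $\G_\p$ breaks at the first vertex lying in $\G_{\p_2}$.

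The main obstacle is the sequential case, because the \toolname semantics of $\p_1 ; \p_2$ splits $\zeta$ into $\zeta_{0:i}$ and $\zeta_{i+1:t}$ (disjoint indices), whereas the abstract-reachability witness passes through a single index $i_j$ where one edge ends and the next begins. Reconciling these requires showing that the construction of $\G_\p$ (which, per the sketch, adds edges from every final vertex of $\G_{\p_1}$ to every \emph{neighbor} of $\G_{\p_2}$'s initial vertex rather than to the initial vertex itself) exactly captures taking one extra transition before starting $\p_2$, and that the subgoal regions line up at the junction. I would isolate this as a lemma stating that the set of infinite trajectories accepted by the sequential graph equals the set of concatenations $\zeta^{(1)} \cdot \zeta^{(2)}$ with $\zeta^{(1)} \models \G_{\p_1}$ and the shifted continuation $\models \G_{\p_2}$. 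The \textbf{always} case has an analogous but easier subtlety: one must verify that requiring $b$ on every \emph{edge-local} subtrajectory is equivalent to requiring it on the concatenated trajectory up through $s_{i_k}$, which follows because consecutive subtrajectories share their endpoint.

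For the vertex count, a straightforward induction on $|\p|$ gives $|U_{\eventually{b}}| = 2$, $|U_{\p_1\always{b}}| = |U_{\p_1}|$, $|U_{\p_1;\p_2}| \le |U_{\p_1}| + |U_{\p_2}|$, and $|U_{\choice{\p_1}{\p_2}}| \le |U_{\p_1}| + |U_{\p_2}| - 1$, all of which telescope to $|U_{\G_\p}| = O(|\p|)$. The detailed construction and these recurrences would be deferred to the appendix referenced as Appendix~\ref{sec:reduction}.
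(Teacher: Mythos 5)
Your overall strategy (structural induction following the graph construction, plus the vertex-count recurrences) matches the paper's, and your recurrences for $|U|$ are correct. However, there is a genuine gap in the inductive invariant you propose. The statement ``$\zeta\models\p$ iff $\zeta\models\G_\p$'' (whether for infinite trajectories or for finite prefixes) is not strong enough to compose through sequencing, because abstract reachability says nothing about the trajectory \emph{after} the last witness index $i_k$, whereas the semantics of $\p_1;\p_2$ requires $\zeta_{0:i}\models\p_1$ for the exact split index $i$ chosen by the outer context, which may lie strictly beyond $i_k$. Concretely, take $\p_1 = (\eventually{b_1})\always{b_2}$: a $\G_{\p_1}$-witness ending at $i_k$ only certifies $\zeta_{0:i_k}\models\p_1$, not $\zeta_{0:i}\models\p_1$ for $i>i_k$, since $b_2$ may fail between $i_k$ and $i$. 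Your proposed key lemma for the sequential case---that the trajectories accepted by $\G_{\p_1;\p_2}$ are exactly concatenations of a $\G_{\p_1}$-witness with a shifted $\G_{\p_2}$-witness---is therefore not the right statement: the connecting edges must additionally constrain the segment between the last subgoal of $\G_{\p_1}$ and the handoff point, and plain $\G_{\p_1}$-acceptance does not supply that constraint.

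The paper's fix, which your proposal is missing, is to carry an auxiliary family of \emph{terminal trajectory} sets $\traj_{\term,\p}^u$ for the final vertices, and to strengthen the induction hypothesis to a finite-trajectory equivalence $\zeta\models\p \iff \zeta\models(\G_\p,\traj_{\term,\p})$, where the extra condition $\zeta_{i_k:t}\in\traj_{\term}^{u_k}$ controls the tail. The sequencing construction then sets the safe set of each connecting edge $u^1\to u^2$ to $\traj_{\term,1}^{u^1}\circ\traj_{\safe,2}^{u_0^2\to u^2}$, which is exactly what reconciles the two split conventions you flag, and the \code{ensuring} case intersects $\traj_{\term}$ with $\traj_b$ as well. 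The infinite-trajectory theorem then follows from the finite-trajectory lemma together with the observation that every length-one trajectory $\zeta = s$ with $s\in\beta(u)$, $u\in F$, lies in $\traj_{\term,\p}^u$, so a graph witness ending at $i_k$ yields $\zeta_{0:i_k}\models\p$. Without introducing $\traj_{\term}$ (or an equivalent strengthening of the invariant), both directions of the sequencing case fail.
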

We give a proof in Appendix~\ref{sec:reduction}. As a consequence, we can solve the reinforcement learning problem for $\phi$ by solving the abstract reachability problem for $\G_\p$. As described below, we leverage the structure of $\G_\p$ in conjunction with reinforcement learning to do so.

\section{Compositional Reinforcement Learning}\label{sec:algo}
In this section, we propose a compositional approach for learning a policy to solve the abstract reachability problem for MDP $\M$ (with unknown transition probabilities) and abstract graph $\G$.

\subsection{Overview}

At a high level, our algorithm proceeds in three steps:
\begin{itemize}
\item For each edge $e=u\to u'$ in $\G$, use RL to learn a neural network (NN) policy $\pi_e$ to try and transition the system from any state $s\in \beta(u)$ to some state $s'\in \beta(u')$ in a safe way according to $\mathcal{Z}_{\text{safe}}^e$. Importantly, this step requires a distribution $\eta_u$ over initial states $s\in \beta(u)$.
\item Use sampling to estimate the probability $P(e;\pi_e,\eta_u)$ that $\pi_e$ safely transitions from $\beta(u)$ to $\beta(u')$.
\item Use Djikstra's algorithm in conjunction with the edge costs $c(e)=-\log(P(e;\pi_e,\eta_u))$ to compute a path $\rho^*=u_0\to u_1\to\cdots\to u_k$ in $\G$ that minimizes
$
c(\rho)=-\sum_{j=0}^{k-1}\log(P(e_j;\pi_j,\eta_j)),
$
where $e_j=u_j\to u_{j+1}$, $\pi_j=\pi_{e_j}$, and $\eta_j=\eta_{u_j}$. 
\end{itemize}
Then, we could choose $\pi$ to be the sequence of policies $\pi_1,...,\pi_{k-1}$---i.e., execute each policy $\pi_j$ until it reaches $\beta(u_{j+1})$, and then switch to $\pi_{j+1}$. 

There are two challenges that need to be addressed in realizing this approach effectively. First, it is unclear what distribution to use as the initial state distribution $\eta_u$ to train $\pi_e$.
Second, it might be unnecessary to learn all the policies since a subset of the edges might be sufficient for the reachability task. Our algorithm (Algorithm~\ref{alg:dijkstra}) addresses these issues by lazily training $\pi_e$---i.e., only training $\pi_e$ when the edge cost $c(e)$ is needed by Djikstra's algorithm.


\begin{algorithm}[tb]
\caption{Compositional reinforcement learning algorithm for solving abstract reachability.}
\label{alg:dijkstra}
\begin{algorithmic}
\FUNCTION{\dirlnospace($\M$, $\G$)}
\STATE Initialize processed vertices $U_p\gets\varnothing$
\STATE Initialize $\Gamma_{u_0}\gets\{u_0\}$, and $\Gamma_u\gets\varnothing$ for $u\neq u_0$
\STATE Initialize edge policies $\Pi\gets\varnothing$
\WHILE{\textbf{true}}
\STATE $u \gets \textsc{NearestVertex}(U\setminus U_p, \Gamma, \Pi)$
\STATE $\rho_u\gets\textsc{ShortestPath}(\Gamma_u)$
\STATE $\eta_u\gets\textsc{ReachDistribution}(\rho_u,\Pi)$
\STATE \textbf{if} {$u\in F$} \textbf{then return} $\textsc{PathPolicy}(\rho_u,\Pi)$
\FOR{$e = u \to u' \in \text{Outgoing}(u)$}
\STATE $\pi_e \gets \textsc{LearnPolicy}(e, \eta_u)$
\STATE Add $\rho_u\circ e$\ to\  $\Gamma_{u'}$ and $\pi_e$ to $\Pi$
\ENDFOR
\STATE Add $u$ to $U_p$
\ENDWHILE
\ENDFUNCTION
\end{algorithmic}
\end{algorithm}

In more detail, \dirl iteratively processes vertices in $\G$ starting from the initial vertex $u_0$, continuing until it processes a final vertex $u\in F$. It maintains the property that for every $u$ it processes, it has already trained policies for all edges along some path $\rho_u$ from $u_0$ to $u$.
This property is satisfied by $u_0$ since there is a path of length zero from $u_0$ to itself.
In Algorithm~\ref{alg:dijkstra}, $\Gamma_u$ is the set of all paths from $u_0$ to $u$ discovered so far, $\Gamma=\bigcup_{u}\Gamma_u$, and $\Pi = \{\pi_e\mid e=u\to u'\in E, u\in U_p\}$ is the set of all edge policies trained so far.
In each iteration, \dirl processes an unprocessed vertex $u$ nearest to $u_0$, which it discovers using \textsc{NearestVertex},  and performs the following steps:
\begin{enumerate}
\item \textsc{ShortestPath} selects the shortest path from $u_0$ to $u$ in $\Gamma_u$, denoted $\rho_u=u_0\to\cdots\to u_k=u$.  
\item \textsc{ReachDistribution} computes the distribution $\eta_u$ over states in $\beta(u)$ induced by using the sequence of policies $\pi_{e_0},...,\pi_{e_{k-1}}\in\Pi$, where $e_j=u_j\to u_{j+1}$ are the edges in $\rho_u$.
\item For every edge $e=u\to u'$, \textsc{LearnPolicy} learns a policy $\pi_e$ for $e$ using $\eta_u$ as the initial state distribution, and adds $\pi_e$ to $\Pi$ and $\rho_{u'}$ to $\Gamma_{u'}$, where
$\rho_{u'}=u_0\to\cdots\to u\to u'$;
$\pi_e$ is trained to ensure that the resulting trajectories from $\beta(u)$ to $\beta(u')$ are in $\traj^e_{\safe}$ with high probability.
\end{enumerate}

\subsection{Definitions and Notation}
\label{sec:definitions}

\textbf{Edge costs.} We begin by defining the edge costs used in Djikstra's algorithm. Given a policy $\pi_e$ for edge $e=u\to u'$, and an initial state distribution $\eta_u$ over the subgoal region $\beta(u)$, the cost $c(e)$ of $e$ is the negative log probability that $\pi_e$ safely transitions the system from $s_0\sim\eta_u$ to $\beta(u')$.
First, we say a trajectory $\zeta$ starting at $s_0$ \emph{achieves} an $e$ if it safely reaches $\beta(u')$---formally:
\begin{definition}
\rm
An infinite trajectory $\zeta = s_0\to s_1\to\cdots$ \emph{achieves} edge $e = u\to u'$ in $\G$ (denoted $\zeta\models e$) if (i) $s_0\in \beta(u)$, and (ii) there exists $i$ (constrained to be positive if $u\neq u_0$) such that $s_i\in \beta(u')$ and $\zeta_{0:i}\in\traj_{\safe}^e$; we denote the smallest such $i$ by $i(\zeta,e)$.
\end{definition}
Then, the probability that $\pi$ achieves $e$ from an initial state $s_0\sim\eta_u$ is 
\begin{align*}
P(e;\pi_e,\eta_u)=\Pr_{s_0\sim\eta_u,\zeta\sim\mathcal{D}_{\pi_e,s_0}}[\zeta\models e],
\end{align*}
where $\mathcal{D}_{\pi_e,s_0}$ is the distribution over infinite trajectories induced by using $\pi_e$ from initial state $s_0$. 
Finally, the cost of edge $e$
 is
$
c(e)=-\log P(e;\pi_e,\eta_u).
$
Note that $c(e)$ is nonnegative for any edge $e$.

\textbf{Path policies.}
Given edge policies $\Pi$ along with a path
$
\rho=u_0\to u_1\to \cdots\to u_k = u
$
in $\G$, we define a \emph{path policy} ${\pi}_{\rho}$ to navigate from $\beta(u_0)$ to $\beta(u)$. In particular, ${\pi}_{\rho}$ executes $\pi_{u_j\to u_{j+1}}$ (starting from $j=0$) until reaching $\beta(u_{j+1})$, after which it increments $j\gets j+1$ (unless $j=k$). That is, ${\pi}_{\rho}$ is designed to achieve the sequence of edges in $\rho$. Note that ${\pi}_{\rho}$ is stateful since it internally keeps track of the index $j$ of the current policy.

\textbf{Induced distribution.}
Let path $\rho=u_0\to\cdots\to u_k=u$ from $u_0$ to $u$ be such that edge policies for all edges along the path have been trained. 
The induced distribution $\eta_\rho$ is defined inductively on the length of $\rho$.
Formally, for the zero length path $\rho = u_0$ (so $u=u_0$), we define $\eta_\rho = \eta$ to be the initial state distribution of the MDP $\M$.
Otherwise, we have $\rho = \rho'\circ e$, where $e = u'\to u$. Then, we define $\eta_\rho$ to be the state distribution over $\beta(u)$ induced by using $\pi_e$ from $s_0\sim\eta_{\rho'}$ conditioned on $\zeta\models e$. Formally, $\eta_\rho$ is the probability distribution over $\beta(u)$ such that for a set of states $S'\subseteq \beta(u)$, the probability of $S'$ according to $\eta_\rho$ is
\begin{align*}
\Pr_{s\sim\eta_\rho}[s\in S'] = \Pr_{s_0\sim\eta_{\rho'}, \zeta\sim\mathcal{D}_{\pi_e, s_0}}\left[s_{i(\zeta,e)}\in S'\mid \zeta\models e\right].
\end{align*}

\textbf{Path costs.}
The cost of a path $\rho=u_0\to\cdots\to u_k=u$ is
$
c(\rho) = -\sum_{j=0}^{k-1}\log P(e_j;\pi_{e_j},\eta_{\rho_{0:j}})
$
where $e_j = u_j\to u_{j+1}$ is the $j$-th edge in $\rho$, and $\rho_{0:j}=u_0\to\cdots \to u_j$ is the $j$-th prefix of $\rho$.

\subsection{Algorithm Details}

\dirl interleaves Djikstra's algorithm with using RL to train policies $\pi_e$. 
Note that the edge weights to run Dijkstra's are not given \emph{a priori} since the edge policies and initial state/induced distributions are unknown. Instead, they are computed on-the-fly beginning from the subgoal region $u_0$ using Algorithm~\ref{alg:dijkstra}. 
We describe each subprocedure below.

\textbf{Processing order (\textsc{NearestVertex}).}
On each iteration, \dirl chooses the vertex $u$ to process next to be an unprocessed vertex that has the shortest path from $u_0$---i.e.,
$
u \in \operatorname*{\arg\min}_{u'\in U\setminus U_p}\operatorname*{\min}_{\rho\in\Gamma_{u'}}c(\rho).
$
This choice is an important part of Djikstra's algorithm. For a graph with fixed costs, it ensures that the computed path $\rho_u$ to each vertex $u$ is minimized. While the costs in our setting are not fixed since they depend on $\eta_u$, this strategy remains an effective heuristic.

\textbf{Shortest path computation
(\textsc{ShortestPath}).}
%
This subroutine returns a path of minimum cost,
$
\rho_u \in \operatorname*{\arg\min}_{\rho\in\Gamma_u}c(\rho).
$
These costs can be estimated using Monte Carlo sampling. 

\textbf{Initial state distribution
(\textsc{ReachDistribution}).}
A key choice \dirl makes is what initial state distribution $\eta_u$ to choose to train policies $\pi_e$ for outgoing edges $e=u\to u'$. 
\dirl chooses the initial state distribution $\eta_u=\eta_{\rho_u}$ to be the distribution of states reached by the path policy ${\pi}_{\rho_u}$ from a random initial state $s_0\sim\eta$.\footnote{This choice is the distribution of states reaching $u$ by the path policy ${\pi}_{\rho}$ eventually returned by \dirl. Thus, it ensures that the training and test distributions for edge policies in ${\pi}_{\rho}$ are equal.}


\textbf{Learning an edge policy 
(\textsc{LearnPolicy}).}
%
Now that the initial state distribution $\eta_u$ is known, we describe how \dirl learns a policy $\pi_e$ for a single edge $e=u\to u'$. At a high level, it trains $\pi_e$ using a standard RL algorithm, where the rewards $\mathbbm{1}(\zeta\models e)$ are designed to encourage $\pi_e$ to safely transition the system to a state in $\beta(u')$. To be precise, \dirl uses RL to compute
$
\pi_e \in \operatorname*{\arg\max}_{\pi}P(e;\pi, \eta_u).
$
Shaped rewards can be used to improve learning; see \autoref{Ap:shaped_rewards}.

\textbf{Constructing a path policy
(\textsc{PathPolicy}).}
Given edge policies $\Pi$ along with a path $\rho=u_0\to\cdots\to u$, where $u\in F$ is a final vertex, \dirl returns the path policy ${\pi}_{\rho}$.


\textbf{Theoretical Guarantee.}
We have the following guarantee (we give a proof in Appendix~\ref{sec:proofs}).
\begin{theorem}
\label{thm:main}
Given a path policy ${\pi_{\rho}}$ corresponding to a path $\rho = u_0\to\cdots\to u_k = u$, where $u\in F$, we have
$\Pr_{\zeta\sim\D_{{\pi_{\rho}}}}[\zeta \models \G]\geq\exp(-c(\rho))$.
\end{theorem}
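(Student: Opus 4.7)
The plan is to show the statement by induction on the length $k$ of the path $\rho$, using the chain rule of probability and the fact that the path policy $\pi_\rho$ executes the edge policies sequentially in a Markovian fashion. Since $\exp(-c(\rho)) = \prod_{j=0}^{k-1} P(e_j; \pi_{e_j}, \eta_{\rho_{0:j}})$, the target inequality is really a product-form lower bound on the probability that all $k$ sub-edges are achieved in sequence.

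Concretely, I would define, for each $j \in \{0, \ldots, k\}$, the event $A_j$ that the first $j$ edges $e_0, \ldots, e_{j-1}$ of $\rho$ are all achieved by $\pi_\rho$; here $A_0$ is the sure event. The key observation is that whenever $A_k$ holds, the trajectory $\zeta$ visits states $s_{i_0}, s_{i_1}, \ldots, s_{i_k}$ with $s_{i_j} \in \beta(u_j)$, and each subtrajectory $\zeta_{i_j:i_{j+1}}$ lies in $\traj_{\safe}^{e_j}$ by definition of achieving an edge; since $u_k \in F$, this witnesses $\zeta \models \G$. So it suffices to prove
\[
\Pr_{\zeta \sim \D_{\pi_\rho}}[A_k] \;\geq\; \prod_{j=0}^{k-1} P(e_j; \pi_{e_j}, \eta_{\rho_{0:j}}).
\]

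To bound $\Pr[A_k]$, I would apply the chain rule, $\Pr[A_k] = \prod_{j=0}^{k-1} \Pr[A_{j+1} \mid A_j]$, and argue that each conditional factor equals $P(e_j; \pi_{e_j}, \eta_{\rho_{0:j}})$. The crucial step is an inductive argument that, conditioned on $A_j$, the state $s_{i_j}$ reached in $\beta(u_j)$ is distributed exactly according to $\eta_{\rho_{0:j}}$. This is just an unrolling of the inductive definition of the induced distribution: $\eta_{\rho_{0:j}}$ was defined as the distribution of the arrival state in $\beta(u_j)$ obtained by starting from $\eta_{\rho_{0:j-1}}$ and running $\pi_{e_{j-1}}$ conditioned on achieving $e_{j-1}$, which is precisely the conditional law given $A_j$. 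By the Markov property of $\M$, together with the fact that the path policy $\pi_\rho$ at this point switches to running $\pi_{e_j}$ from the current state, the conditional probability $\Pr[A_{j+1} \mid A_j]$ is then $\Pr_{s \sim \eta_{\rho_{0:j}}, \zeta' \sim \D_{\pi_{e_j}, s}}[\zeta' \models e_j] = P(e_j; \pi_{e_j}, \eta_{\rho_{0:j}})$.

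The main obstacle is formalizing the Markov/measurability step: verifying that the behavior of $\pi_\rho$ after hitting $\beta(u_j)$ depends on the history only through the hitting state, so that one can genuinely decouple the conditional law on the prefix from the fresh application of $\pi_{e_j}$. This is where one needs the stateful nature of $\pi_\rho$ (it only tracks the index $j$, not the full history) and the Markov structure of $\M$. Once this is in place, multiplying the chain-rule factors yields $\Pr[A_k] = \prod_{j=0}^{k-1} P(e_j; \pi_{e_j}, \eta_{\rho_{0:j}}) = \exp(-c(\rho))$, and combining with $\Pr[\zeta \models \G] \geq \Pr[A_k]$ completes the proof.
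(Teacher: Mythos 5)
Your proposal is correct and takes essentially the same approach as the paper: your events $A_j$ together with the chain-rule factorization $\Pr[A_k]=\prod_{j=0}^{k-1}\Pr[A_{j+1}\mid A_j]$ are exactly the paper's notion of ``greedily achieving'' the path and its recursive decomposition of that probability into $\prod_{j=0}^{k-1}P(e_j;\pi_{e_j},\eta_{\rho_{0:j}})=\exp(-c(\rho))$. Both arguments rest on the same two observations you single out---that conditioned on achieving the prefix, the arrival state in $\beta(u_j)$ has law $\eta_{\rho_{0:j}}$ by the inductive definition of the induced distribution, and that the Markov property (plus the fact that $\pi_\rho$ tracks only the index $j$) reduces each conditional factor to a fresh run of $\pi_{e_j}$.
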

In other words, we guarantee that minimizing the path cost $c(\rho)$ corresponds to maximizing a lower bound on the objective of the abstract reachability problem.

\section{Experiments}\label{sec:exp}
We empirically evaluate our approach on several continuous control environments; details are in \autoref{ap:methodology}, \ref{Ap:RoomsCaseStudy} and \ref{Ap:FetchCaseStudy}. 

\textbf{Rooms environment.}
We consider the 9-Rooms environment shown in \autoref{fig:Motivate}, and a similar 16-Rooms environment. They have states $(x,y)\in\mathbb{R}^2$ encoding 2D position, actions $(v,\theta)\in\mathbb{R}^2$ encoding speed and direction, and transitions $s'=s+(v\cos(\theta),v\sin(\theta))$.
For 9-Rooms, we consider specifications similar to $\p_{\text{ex}}$ in \autoref{fig:Motivate}. 
For 16-Rooms, we consider a series of increasingly challenging specifications $\p_1,...,\p_5$; each $\p_i$ encodes a sequence of $i$ sub-specifications, each of which has the same form as $\p_{\text{ex}}$ (see \autoref{Ap:RoomsCaseStudy}).
We learn policies using ARS~\citep{mania2018simple} with shaped rewards (see \autoref{Ap:shaped_rewards}); each one is a fully connected NN with 2 hidden layers of 30 neurons each.


\textbf{Fetch environment.}
We consider the Fetch-Pick-And-Place environment in OpenAI Gym \citep{gym}, consisting of a robotic arm that can grasp objects and a block to manipulate. The state space is $\R^{25}$, which includes components encoding the gripper position, the (relative) position of the object, and the distance between the gripper fingers. The action space is $\mathbb{R}^4$, where the first 3 components encode the target gripper position and the last encodes the target gripper width. The block's initial position is a random location on a table. We consider predicates \emph{NearObj} (indicates if the gripper of the arm is close to the block), \emph{HoldingObj} (indicates if the gripper is holding the block), \emph{LiftedObj} (indicates if the block is above the table), and \emph{ObjAt}$[g]$ (indicates if the block is close to a goal $g\in\R^3$).

We consider three specifications. First, \emph{PickAndPlace} is
\begin{align*}
\p_1 = \text{NearObj}; {\text{HoldingObj}}; {\text{LiftedObj}}; {\text{ObjAt}[g]},
\end{align*}
where $g$ is a random goal location. Second, \emph{PickAndPlaceStatic} is similar to the previous one, except the goal location is fixed. Third, \emph{PickAndPlaceChoice} involves choosing between two tasks, each of which is a sequence of two subtasks similar to PickAndPlaceStatic.
We learn policies using TD3~\citep{fujimoto2018addressing} with shaped rewards; each one is a fully connected NN with 2 hidden layers of 256 neurons each.

\textbf{Baselines.}
We compare our approach to four state-of-the-art algorithms for learning from specifications, \toolname~\citep{spectrl}, \textsc{Qrm}~\citep{icarte2018using}, \textsc{Hrm}~\citep{tor-etal-arxiv20}, and a \tltl~\citep{li2017reinforcement} based approach, as well as a state-of-the-art hierarchical RL algorithm, \textsc{R-avi}~\citep{jothimurugan2021abstract}, that leverages state abstractions. We used publicly available implementations of \toolname, \textsc{Qrm}, \textsc{Hrm} and \textsc{R-avi}. For \textsc{Qrm} and \textsc{Hrm}, we manually encoded the tasks as reward machines with continuous rewards. The variants \textsc{Qrm+cr} and \textsc{Hrm+cr} use counterfactual reasoning to reuse samples during training. Our implementation of \tltl uses the quantitative semantics defined in \citet{li2017reinforcement} with ARS to learn a single policy for each task. We used the subgoal regions and the abstract graph generated by our algorithm as inputs to \textsc{R-avi}. Since \textsc{R-avi} only supports disjoint subgoal regions and furthermore assumes the ability to sample from any subgoal region, we only ran \textsc{R-avi} on supported benchmarks. The learning curves for \textsc{R-avi} denote the probability of reaching the final goal region in the $y$-axis which is an upper bound on the probability of satisfying the specification. Note that \dirl returns a policy only after the search finishes. Thus, to plot learning curves, we ran our algorithm multiple times with different number of episodes used for learning edge policies. 



\begin{figure*}[t]
\centering
\begin{subfigure}{0.3\textwidth}
\centering
\includegraphics[width=\textwidth]{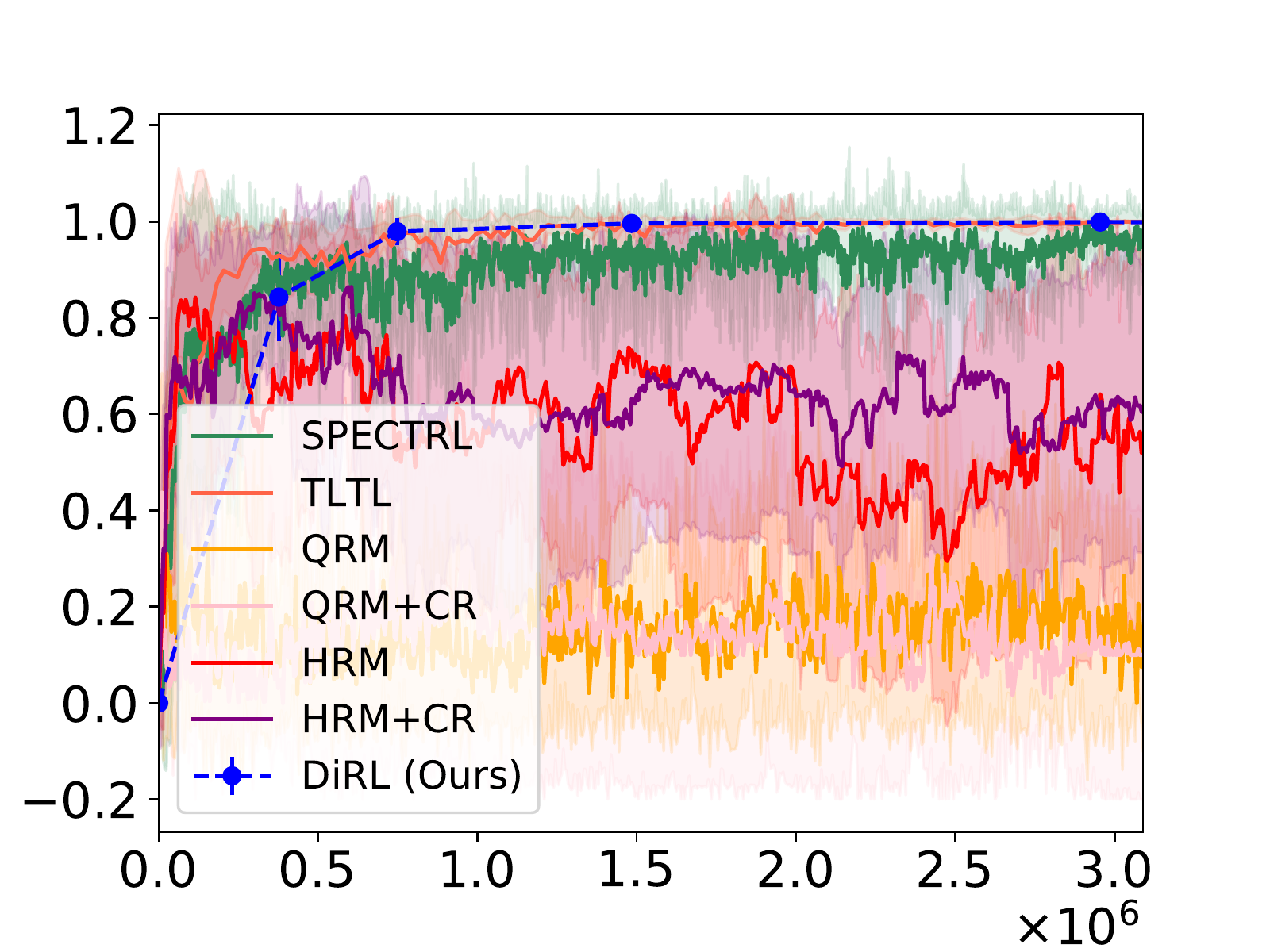}
\caption{Half sub-specification $\p_1$, $|\G_{\p_1}| = 2$.}
\label{fig:16rooms9}
\end{subfigure}
\quad
\begin{subfigure}{0.3\textwidth}
\centering
\includegraphics[width=\textwidth]{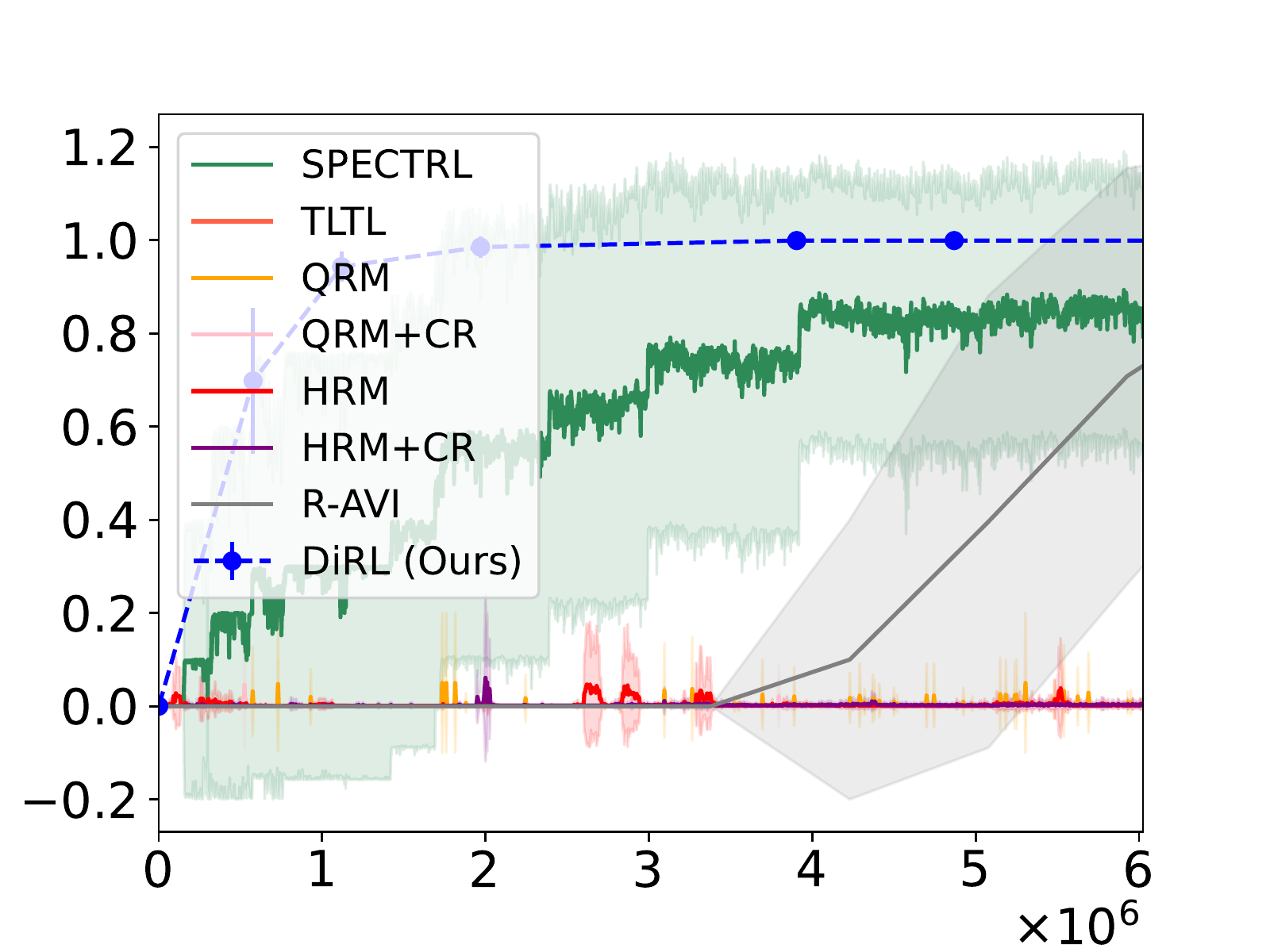}
\caption{1 sub-specification $\p_2$, $|\G_{\p_2}| = 4$.}
\label{fig:16rooms10}
\end{subfigure}
\quad
\begin{subfigure}{0.3\textwidth}
\centering
\includegraphics[width=\textwidth]{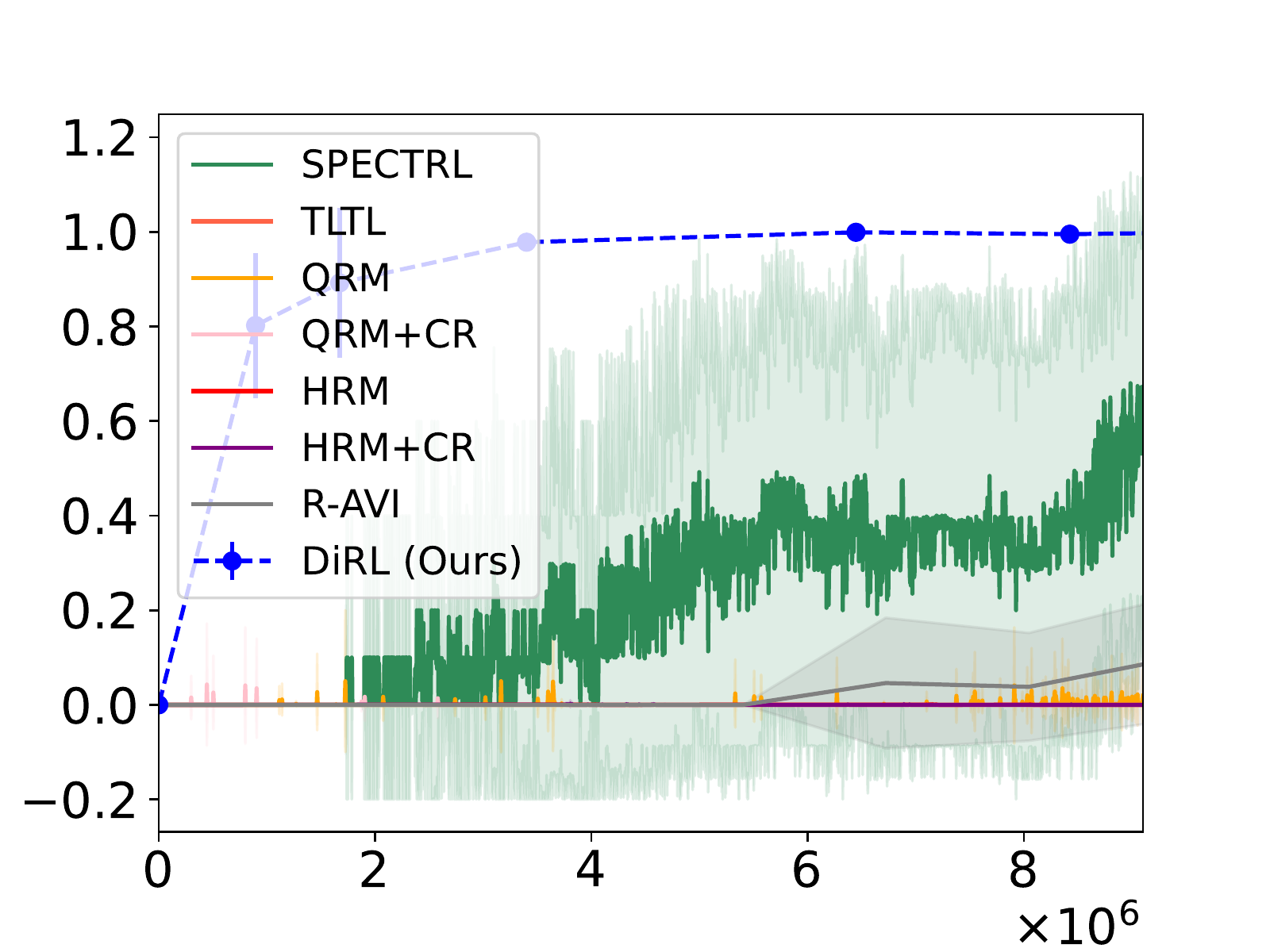}
\caption{2 sub-specifications $\p_3$, $|\G_{\p_3}| = 8$.}
\label{fig:16rooms11}
\end{subfigure}
\begin{subfigure}{0.3\textwidth}
\centering
\includegraphics[width=\textwidth]{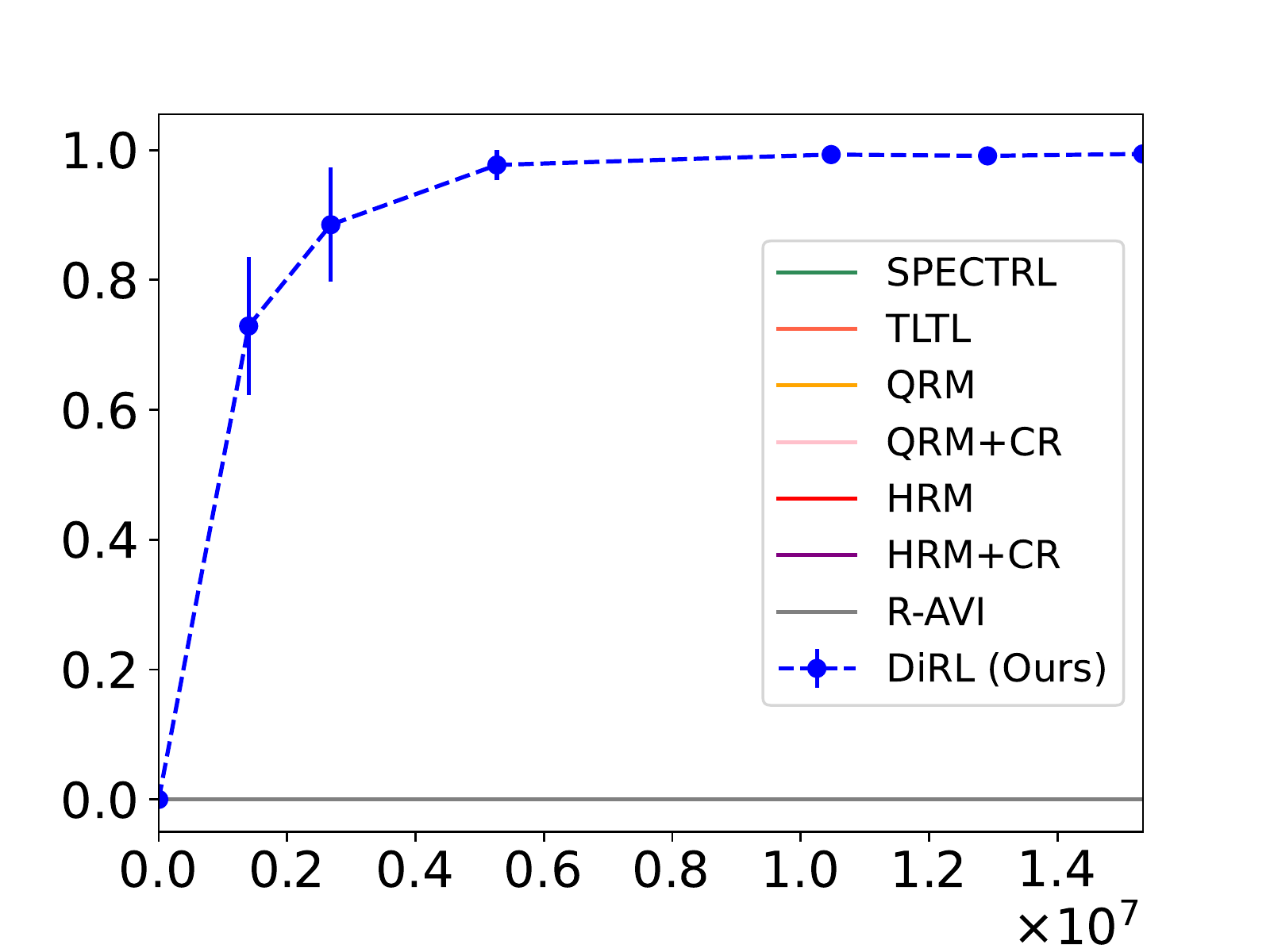}
\caption{3 sub-specifications $\p_4$, $|\G_{\p_4}| = 12$.}
\label{fig:16rooms12}
\end{subfigure}
\quad
\begin{subfigure}{0.3\textwidth}
\centering
\includegraphics[width=\textwidth]{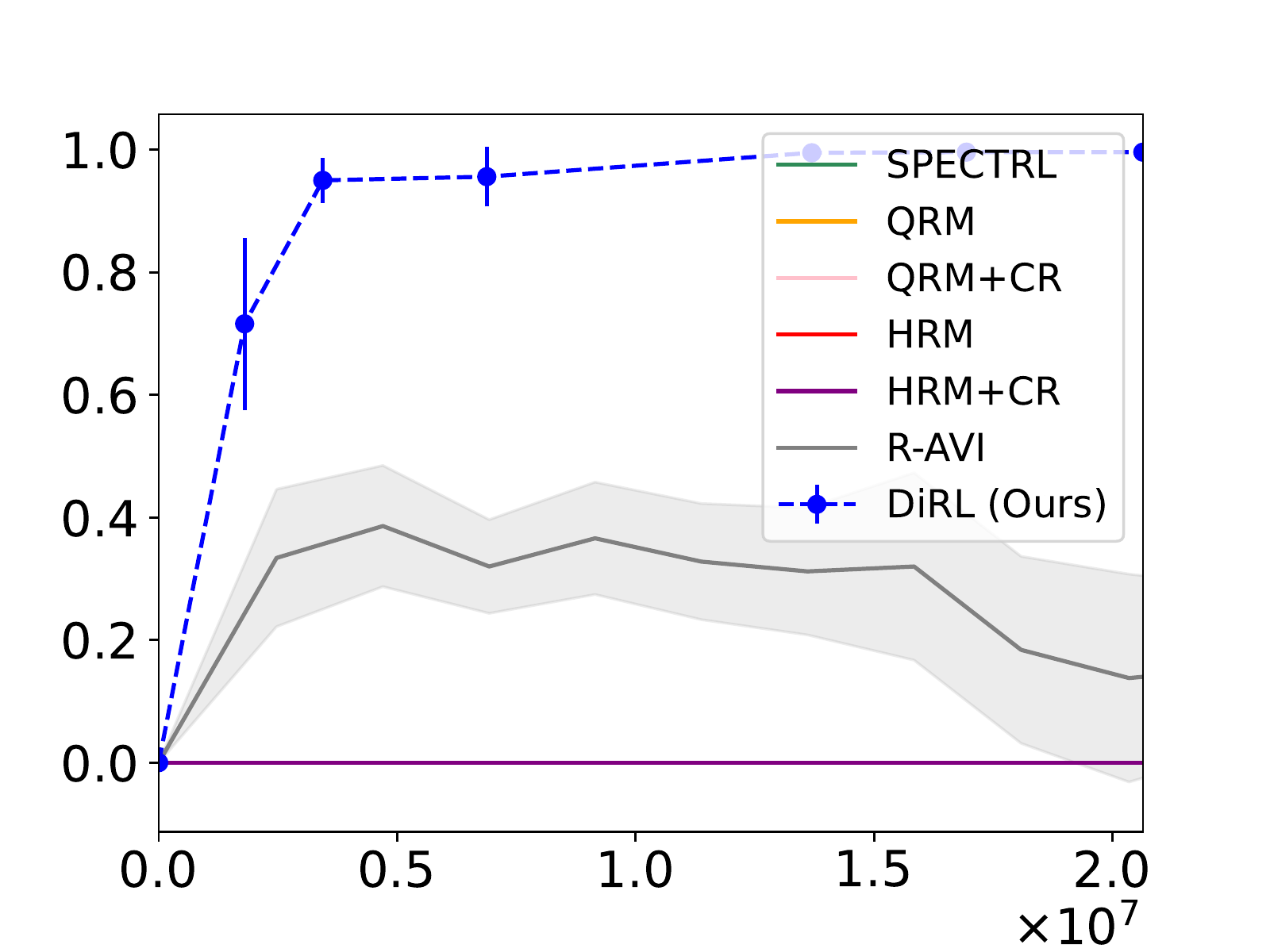}
\caption{4 sub-specifications $\p_5$, $|\G_{\p_5}| = 16$.}
\label{fig:16rooms13}
\end{subfigure}
\quad
\begin{subfigure}{0.3\textwidth}
\centering
\includegraphics[width=\linewidth]{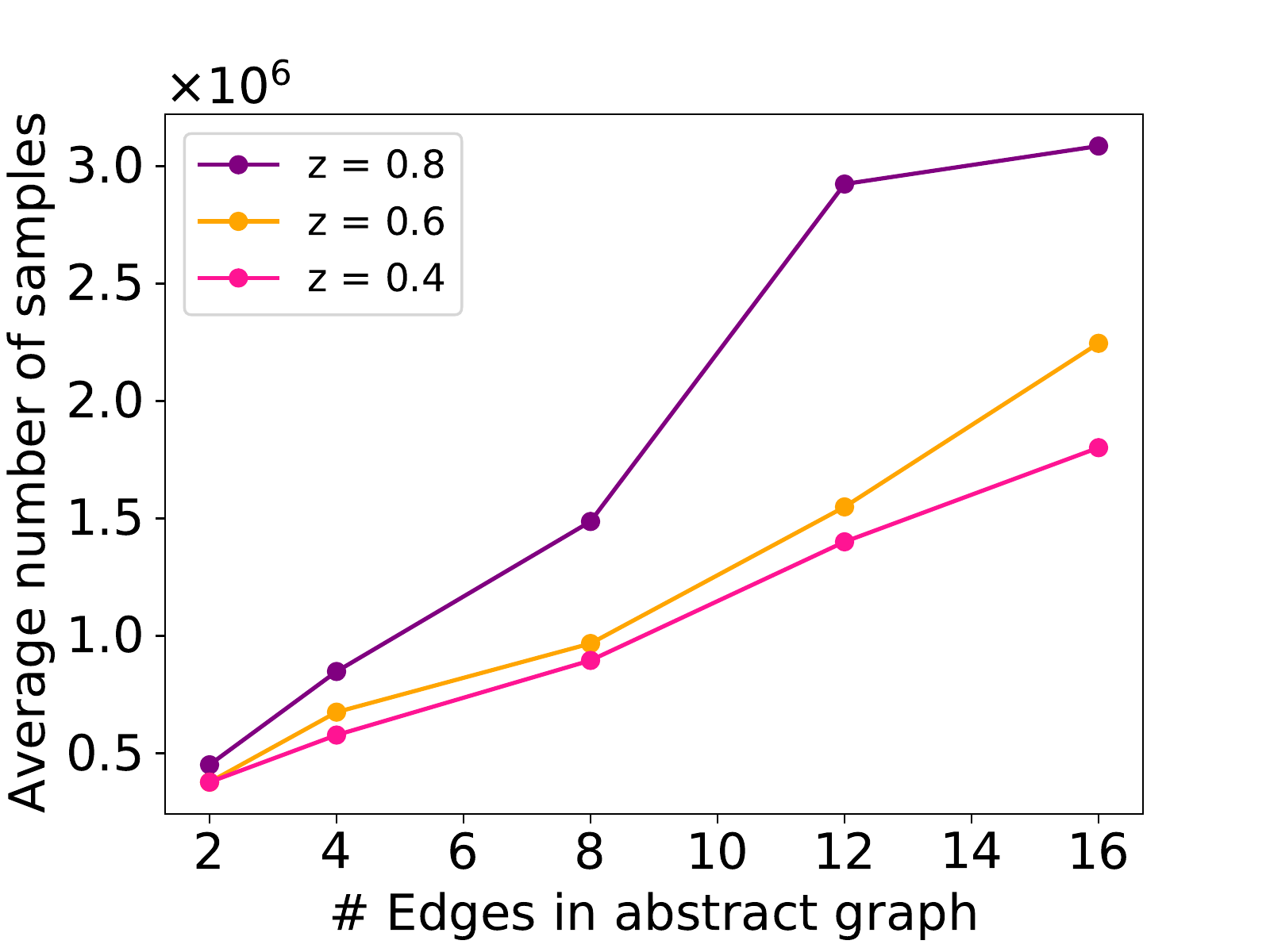}
\caption{Sample complexity curves.}
\label{fig:scalability}
\end{subfigure}
\caption{(a)-(e) Learning curves for 16-Rooms environment with different specifications increasing in complexity from from (a) to (e). $x$-axis denotes the number of samples (steps) and $y$-axis denotes the estimated probability of success.
Results are averaged over 10 runs with error bars indicating $\pm$ standard deviation. (f) shows the average number of samples (steps) needed to achieve a success probability $\geq z$ ($y$-axis) as a function of the size of the abstract graph $|\G_{\p}|$.}
\label{Fig:16Rooms}
\end{figure*}

\textbf{Results.}
%
\autoref{Fig:16Rooms} shows learning curves on the specifications for 16-Rooms environment with all doors open. None of the baselines scale beyond $\phi_2$ (one segment), while \dirl quickly converges to high-quality policies for all specifications. 
The \tltl baseline performs poorly since most of these tasks require stateful policies, which it does not support. Though \toolname can learn stateful policies, it scales poorly since (i) it does not decompose the learning problem into simpler ones, and (ii) it does not integrate model-based planning at the high-level. Reward Machine based approaches (\textsc{Qrm} and \textsc{Hrm}) are also unable to handle complex specifications, likely because they are completely based on model-free RL, and do not employ model-based planning at the high-level. Although \textsc{R-avi} uses model-based planning at the high-level in conjunction with low-level RL, it does not scale to complex specifications since it trains all edge policies multiple times (across multiple iterations) with different initial state distributions; in contrast, our approach trains any edge policy at most once.

We summarize the scalability of \dirl in \autoref{fig:scalability}, where we show the average number of steps needed to achieve a given success probability $z$ as a function of the number of edges in $\G_\p$ (denoted by $|\G_\p|$). As can be seen, the sample complexity of \dirl scales roughly linearly in the graph size. Intuitively, each subtask takes a constant number of steps to learn, so the total number of steps required is proportional to $|\G_\p|$.
In the supplement, we show learning curves for 9-Rooms (\autoref{Fig:9RoomsLC}) for a variety of specifications, and learning curves for a variant of 16-Rooms with many blocked doors with the same specifications described above (\autoref{Fig:16_4Rooms}). These experiments demonstrate the robustness of our tool on different specifications and environments.
For instance, in the 16-Rooms environment with blocked doors, fewer policies satisfy the specification, which makes learning more challenging but \dirl is still able to learn high-quality policies for all the specifications.

\begin{figure*}[t]
\centering
\begin{subfigure}{0.3\textwidth}
\centering
\includegraphics[width=\textwidth]{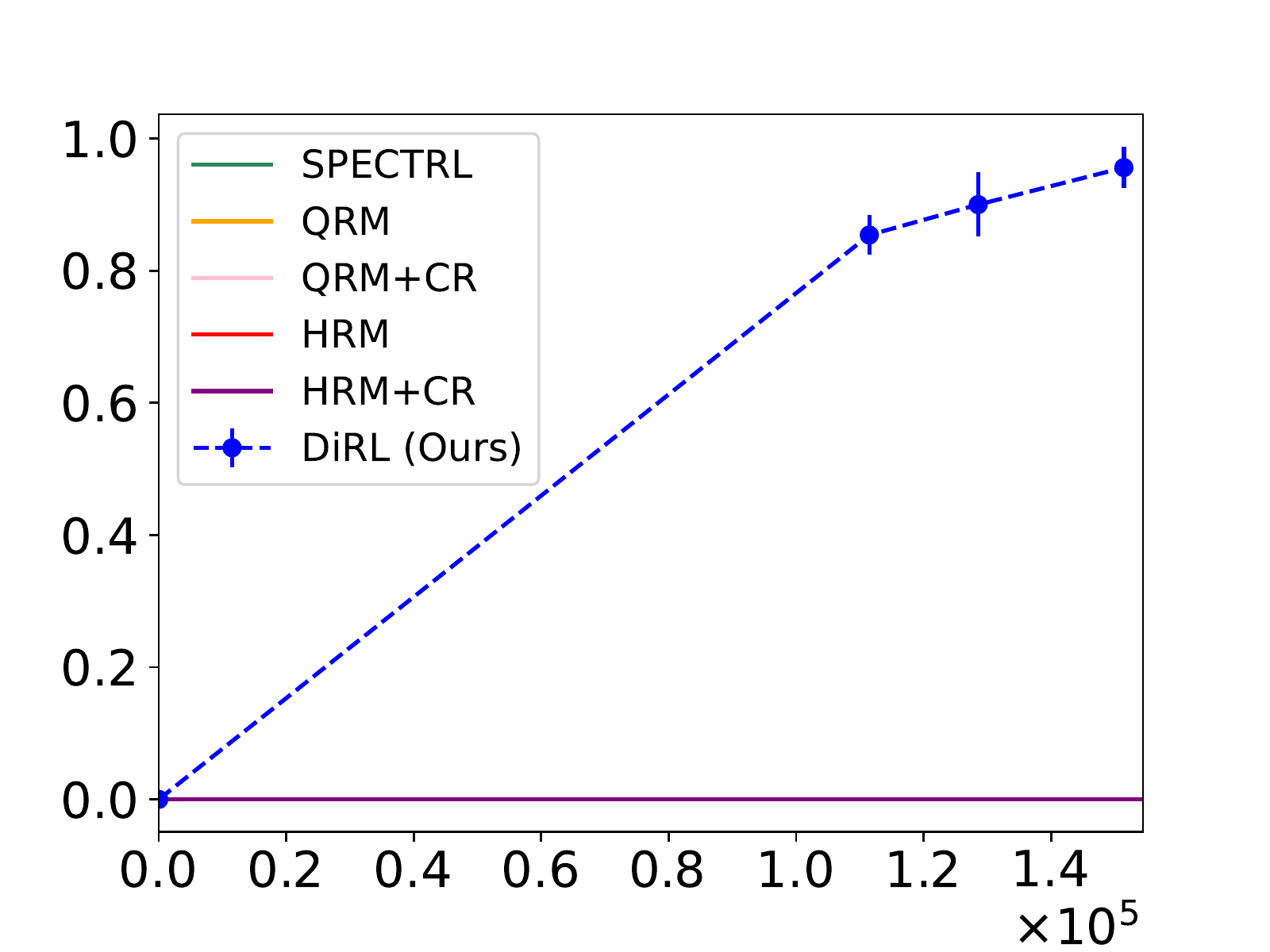}
\caption{PickAndPlace}
\label{fig:fpp_spec3}
\end{subfigure}
\quad
\begin{subfigure}{0.3\textwidth}
\centering
\includegraphics[width=\textwidth]{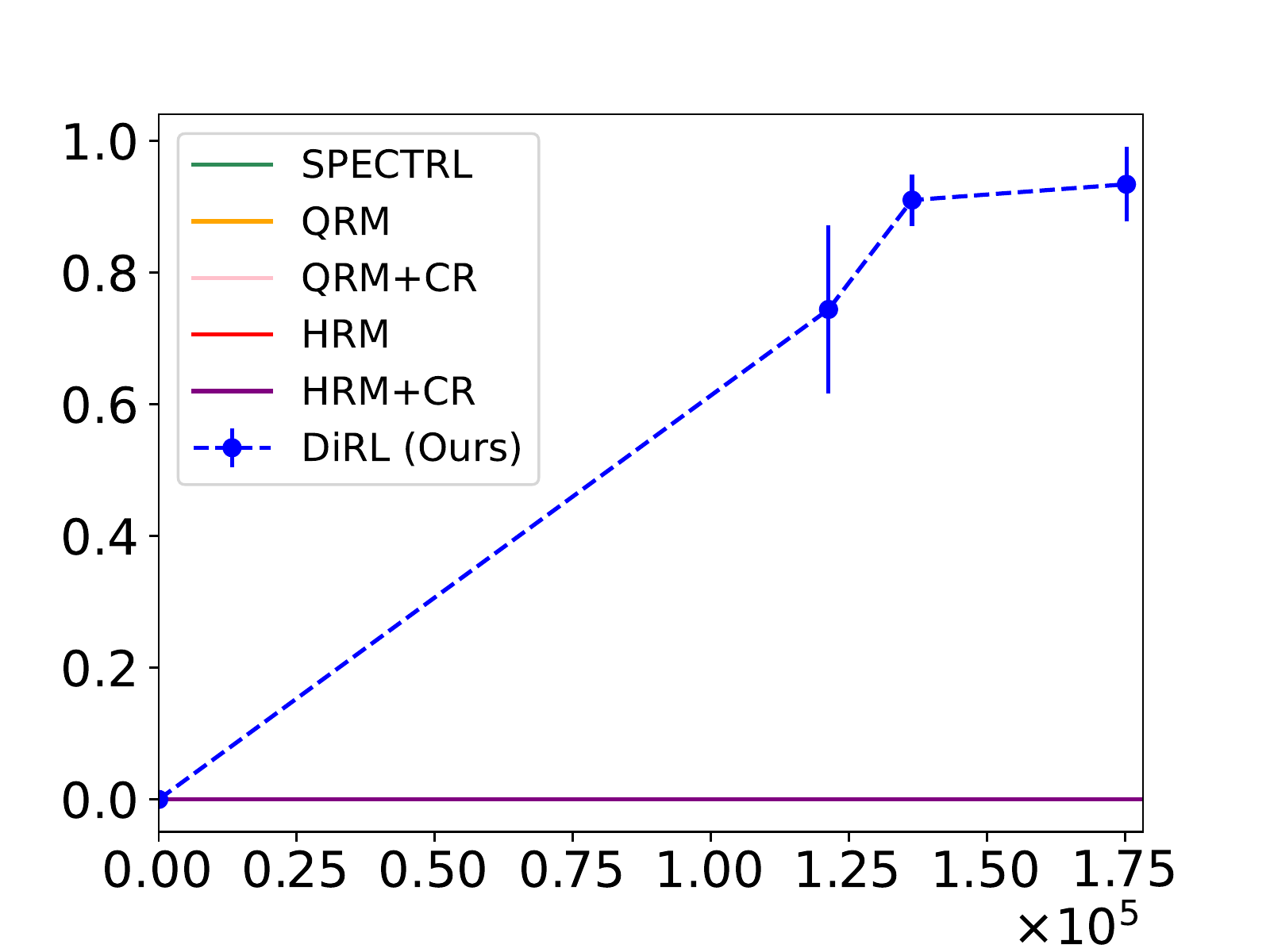}
\caption{PickAndPlaceStatic}
\label{fig:fpp_spec4}
\end{subfigure}
\quad
\begin{subfigure}{0.3\textwidth}
\centering
\includegraphics[width=\textwidth]{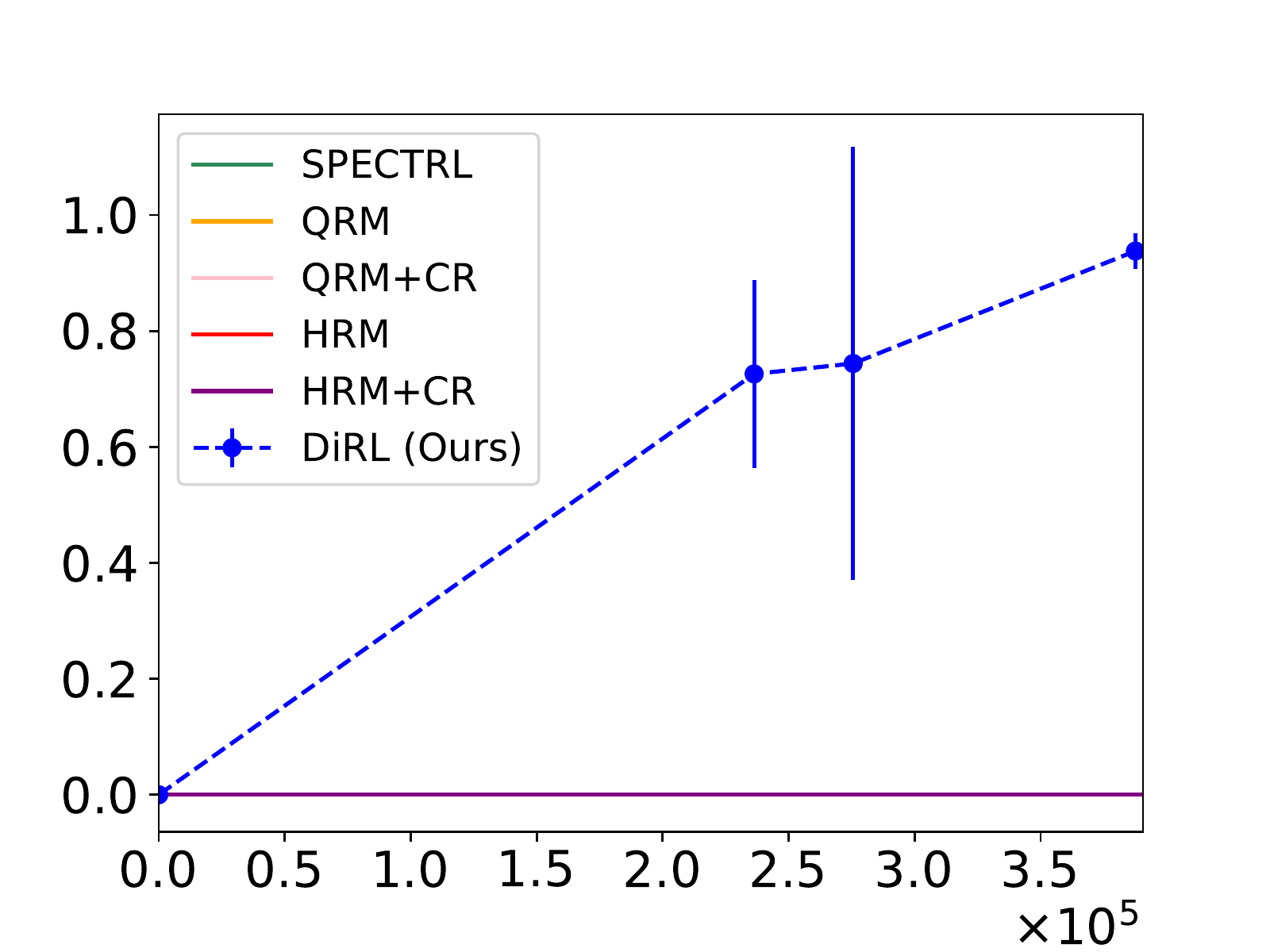}
\caption{PickAndPlaceChoice}
\label{fig:fpp_spec5}
\end{subfigure}
\caption{Learning curves for Fetch environment; $x$-axis denotes the total number of samples (steps) and $y$-axis denotes the estimated probability of success. Results are averaged over 5 runs with error bars indicating $\pm$ standard deviation.}
\label{Fig:fpp}
\end{figure*}

Next, we show results for the Fetch environment in \autoref{Fig:fpp}. The trends are similar to before---\dirl leverages compositionality to quickly learn effective policies, whereas the baselines are ineffective. The last task is especially challenging, taking \dirl somewhat longer to solve, but it ultimately achieves similar effectiveness. These results demonstrate that \dirl can scale to complex specifications even in challenging environments with high-dimensional state spaces.

\section{Conclusions}\label{sec:conc}
%



We have proposed \dirl, a reinforcement learning approach for logical specifications that leverages the compositional structure of the specification to decouple high-level planning and low-level control. Our experiments demonstrate that \dirl can effectively solve complex continuous control tasks, significantly improving over existing approaches. Logical specifications are a promising approach to enable users to more effectively specify robotics tasks; by enabling more scalable learning of these specifications, we are directly enabling users to specify more complex objectives through the underlying specification language. While we have focused on \toolname specifications, we believe our approach can also enable the incorporation of more sophisticated features into the underlying language, such as conditionals (i.e., only perform a subtask upon observing some property of the environment) and iterations (i.e., repeat a subtask until some objective is met).

\subsection*{Limitations}  \dirl assumes the ability to sample trajectories starting at any state $s\in S$ that has been observed before, whereas in some cases it might only be possible to obtain trajectories starting at some initial state. One way to overcome this limitation is to use the learnt path policies for sampling---i.e., in order to sample a state from a subgoal region $\beta(u)$ corresponding to a vertex $u$ in the abstract graph, we could sample an initial state $s_0\sim\eta$ from $\beta(u_0)$ and execute the path policy $\pi_{\rho_u}$ corresponding to the shortest path $\rho_u$ from $u_0$ to $u$ starting at $s_0$. Upon successfully reaching $\beta(u)$ (we can restart the sampling procedure if $\beta(u)$ is not reached), the system will be in a state $s\sim\eta_u$ in $\beta(u)$ from which we can simulate the system further.

Another limitation of our approach is that we only consider path policies. It is possible that an optimal policy must follow different high-level plans from different states within the same subgoal region. We believe this limitation can be addressed in future work by modifying our algorithm appropriately.

\subsection*{Societal impacts} Our work seeks to improve reinforcement learning for complex long-horizon tasks. Any progress in this direction would enable robotics applications both with positive impact---e.g., flexible and general-purpose manufacturing robotics, robots for achieving agricultural tasks, and robots that can be used to perform household chores---and with negative or controversial impact---e.g., military applications. These issues are inherent in all work seeking to improve the abilities of robots.

\subsection*{Acknowledgements and Funding} We thank the anonymous reviewers for their helpful comments. Funding in direct support of this work: CRA/NSF Computing Innovations Fellow Award,  DARPA Assured Autonomy project under Contract No. FA8750-18-C-0090, ONR award N00014-20-1-2115, NSF grant CCF-1910769, and ARO grant W911NF-20-1-0080.

\bibliography{main}

\begin{thebibliography}{40}
\providecommand{\natexlab}[1]{#1}
\providecommand{\url}[1]{\texttt{#1}}
\expandafter\ifx\csname urlstyle\endcsname\relax
  \providecommand{\doi}[1]{doi: #1}\else
  \providecommand{\doi}{doi: \begingroup \urlstyle{rm}\Url}\fi

\bibitem[Abel et~al.(2020)Abel, Umbanhowar, Khetarpal, Arumugam, Precup, and
  Littman]{abel2020value}
David Abel, Nate Umbanhowar, Khimya Khetarpal, Dilip Arumugam, Doina Precup,
  and Michael Littman.
\newblock Value preserving state-action abstractions.
\newblock In \emph{International Conference on Artificial Intelligence and
  Statistics}, pages 1639--1650. PMLR, 2020.

\bibitem[Aksaray et~al.(2016)Aksaray, Jones, Kong, Schwager, and
  Belta]{aksaray2016q}
Derya Aksaray, Austin Jones, Zhaodan Kong, Mac Schwager, and Calin Belta.
\newblock Q-learning for robust satisfaction of signal temporal logic
  specifications.
\newblock In \emph{Conference on Decision and Control (CDC)}, pages 6565--6570.
  IEEE, 2016.

\bibitem[Amodei et~al.(2016)Amodei, Olah, Steinhardt, Christiano, Schulman, and
  Man{\'e}]{amodei2016concrete}
Dario Amodei, Chris Olah, Jacob Steinhardt, Paul Christiano, John Schulman, and
  Dan Man{\'e}.
\newblock Concrete problems in ai safety.
\newblock \emph{arXiv preprint arXiv:1606.06565}, 2016.

\bibitem[Anderson et~al.(2020)Anderson, Verma, Dillig, and
  Chaudhuri]{anderson2020neurosymbolic}
Greg Anderson, Abhinav Verma, Isil Dillig, and Swarat Chaudhuri.
\newblock Neurosymbolic reinforcement learning with formally verified
  exploration.
\newblock In \emph{Advances in Neural Information Processing Systems}, 2020.

\bibitem[Andreas et~al.(2017)Andreas, Klein, and Levine]{andreas2017modular}
Jacob Andreas, Dan Klein, and Sergey Levine.
\newblock Modular multitask reinforcement learning with policy sketches.
\newblock In \emph{International Conference on Machine Learning}, pages
  166--175, 2017.

\bibitem[Andrychowicz et~al.(2020)Andrychowicz, Baker, Chociej, Jozefowicz,
  McGrew, Pachocki, Petron, Plappert, Powell, Ray,
  et~al.]{andrychowicz2020learning}
OpenAI:~Marcin Andrychowicz, Bowen Baker, Maciek Chociej, Rafal Jozefowicz, Bob
  McGrew, Jakub Pachocki, Arthur Petron, Matthias Plappert, Glenn Powell, Alex
  Ray, et~al.
\newblock Learning dexterous in-hand manipulation.
\newblock \emph{The International Journal of Robotics Research}, 39\penalty0
  (1):\penalty0 3--20, 2020.

\bibitem[Brafman et~al.(2018)Brafman, De~Giacomo, and Patrizi]{brafman2018ltlf}
Ronen Brafman, Giuseppe De~Giacomo, and Fabio Patrizi.
\newblock Ltlf/ldlf non-markovian rewards.
\newblock In \emph{Proceedings of the AAAI Conference on Artificial
  Intelligence}, volume~32, 2018.

\bibitem[Brockman et~al.(2016)Brockman, Cheung, Pettersson, Schneider,
  Schulman, Tang, and Zaremba]{gym}
Greg Brockman, Vicki Cheung, Ludwig Pettersson, Jonas Schneider, John Schulman,
  Jie Tang, and Wojciech Zaremba.
\newblock Openai gym, 2016.

\bibitem[Camacho et~al.(2019)Camacho, Toro~Icarte, Klassen, Valenzano, and
  McIlraith]{ijcai2019ltl}
Alberto Camacho, Rodrigo Toro~Icarte, Toryn~Q. Klassen, Richard Valenzano, and
  Sheila~A. McIlraith.
\newblock Ltl and beyond: Formal languages for reward function specification in
  reinforcement learning.
\newblock In \emph{International Joint Conference on Artificial Intelligence},
  pages 6065--6073, 7 2019.

\bibitem[Chebotar et~al.(2017)Chebotar, Hausman, Zhang, Sukhatme, Schaal, and
  Levine]{chebotar2017combining}
Yevgen Chebotar, Karol Hausman, Marvin Zhang, Gaurav Sukhatme, Stefan Schaal,
  and Sergey Levine.
\newblock Combining model-based and model-free updates for trajectory-centric
  reinforcement learning.
\newblock In \emph{International conference on machine learning}, pages
  703--711. PMLR, 2017.

\bibitem[Collins et~al.(2005)Collins, Ruina, Tedrake, and
  Wisse]{collins2005efficient}
Steve Collins, Andy Ruina, Russ Tedrake, and Martijn Wisse.
\newblock Efficient bipedal robots based on passive-dynamic walkers.
\newblock \emph{Science}, 307\penalty0 (5712):\penalty0 1082--1085, 2005.

\bibitem[De~Giacomo et~al.(2019)De~Giacomo, Iocchi, Favorito, and
  Patrizi]{de2019foundations}
Giuseppe De~Giacomo, Luca Iocchi, Marco Favorito, and Fabio Patrizi.
\newblock Foundations for restraining bolts: Reinforcement learning with
  ltlf/ldlf restraining specifications.
\newblock In \emph{Proceedings of the International Conference on Automated
  Planning and Scheduling}, volume~29, pages 128--136, 2019.

\bibitem[Eysenbach et~al.(2019)Eysenbach, Salakhutdinov, and
  Levine]{Eysenbach2019SearchOT}
Benjamin Eysenbach, Ruslan Salakhutdinov, and Sergey Levine.
\newblock Search on the replay buffer: Bridging planning and reinforcement
  learning.
\newblock In \emph{NeurIPS}, 2019.

\bibitem[Fujimoto et~al.(2018)Fujimoto, Hoof, and
  Meger]{fujimoto2018addressing}
Scott Fujimoto, Herke Hoof, and David Meger.
\newblock Addressing function approximation error in actor-critic methods.
\newblock In \emph{International Conference on Machine Learning}, pages
  1587--1596, 2018.

\bibitem[Hahn et~al.(2019)Hahn, Perez, Schewe, Somenzi, Trivedi, and
  Wojtczak]{moritz2019}
Ernst~Moritz Hahn, Mateo Perez, Sven Schewe, Fabio Somenzi, Ashutosh Trivedi,
  and Dominik Wojtczak.
\newblock Omega-regular objectives in model-free reinforcement learning.
\newblock In \emph{Tools and Algorithms for the Construction and Analysis of
  Systems}, pages 395--412, 2019.

\bibitem[Hammond et~al.(2021)Hammond, Abate, Gutierrez, and
  Wooldridge]{hammond2021}
Lewis Hammond, Alessandro Abate, Julian Gutierrez, and Michael Wooldridge.
\newblock Multi-agent reinforcement learning with temporal logic
  specifications.
\newblock In \emph{International Conference on Autonomous Agents and MultiAgent
  Systems}, page 583–592, 2021.

\bibitem[Hasanbeig et~al.(2019)Hasanbeig, Kantaros, Abate, Kroening, Pappas,
  and Lee]{hasanbeig2019}
M.~Hasanbeig, Y.~Kantaros, A.~Abate, D.~Kroening, G.~J. Pappas, and I.~Lee.
\newblock Reinforcement learning for temporal logic control synthesis with
  probabilistic satisfaction guarantees.
\newblock In \emph{Conference on Decision and Control (CDC)}, pages 5338--5343,
  2019.

\bibitem[Hasanbeig et~al.(2018)Hasanbeig, Abate, and
  Kroening]{hasanbeig2018logically}
Mohammadhosein Hasanbeig, Alessandro Abate, and Daniel Kroening.
\newblock Logically-constrained reinforcement learning.
\newblock \emph{arXiv preprint arXiv:1801.08099}, 2018.

\bibitem[Icarte et~al.(2018)Icarte, Klassen, Valenzano, and
  McIlraith]{icarte2018using}
Rodrigo~Toro Icarte, Toryn Klassen, Richard Valenzano, and Sheila McIlraith.
\newblock Using reward machines for high-level task specification and
  decomposition in reinforcement learning.
\newblock In \emph{International Conference on Machine Learning}, pages
  2107--2116. PMLR, 2018.

\bibitem[Illanes et~al.(2020)Illanes, Yan, Toro~Icarte, and
  McIlraith]{Illanes_Yan_ToroIcarte_McIlraith_2020}
León Illanes, Xi~Yan, Rodrigo Toro~Icarte, and Sheila~A. McIlraith.
\newblock Symbolic plans as high-level instructions for reinforcement learning.
\newblock \emph{Proceedings of the International Conference on Automated
  Planning and Scheduling}, 30\penalty0 (1):\penalty0 540--550, Jun. 2020.

\bibitem[Inala et~al.(2020)Inala, Bastani, Tavares, and
  Solar-Lezama]{Inala2020SynthesizingPP}
Jeevana~Priya Inala, Osbert Bastani, Zenna Tavares, and Armando Solar-Lezama.
\newblock Synthesizing programmatic policies that inductively generalize.
\newblock In \emph{Internation Conference on Learning Representations}, 2020.

\bibitem[Inala et~al.(2021)Inala, Yang, Paulos, Pu, Bastani, Kumar, Rinard, and
  Solar-Lezama]{inala2021neurosymbolic}
Jeevana~Priya Inala, Yichen Yang, James Paulos, Yewen Pu, Osbert Bastani, Vijay
  Kumar, Martin Rinard, and Armando Solar-Lezama.
\newblock Neurosymbolic transformers for multi-agent communication.
\newblock \emph{arXiv preprint arXiv:2101.03238}, 2021.

\bibitem[Jiang et~al.(2020)Jiang, Bharadwaj, Wu, Shah, Topcu, and
  Stone]{jiang2020temporallogicbased}
Yuqian Jiang, Sudarshanan Bharadwaj, Bo~Wu, Rishi Shah, Ufuk Topcu, and Peter
  Stone.
\newblock Temporal-logic-based reward shaping for continuing learning tasks,
  2020.

\bibitem[Jothimurugan et~al.(2019)Jothimurugan, Alur, and Bastani]{spectrl}
Kishor Jothimurugan, Rajeev Alur, and Osbert Bastani.
\newblock A composable specification language for reinforcement learning tasks.
\newblock In \emph{Advances in Neural Information Processing Systems},
  volume~32, pages 13041--13051, 2019.

\bibitem[Jothimurugan et~al.(2021)Jothimurugan, Bastani, and
  Alur]{jothimurugan2021abstract}
Kishor Jothimurugan, Osbert Bastani, and Rajeev Alur.
\newblock Abstract value iteration for hierarchical reinforcement learning.
\newblock In \emph{International Conference on Artificial Intelligence and
  Statistics}, pages 1162--1170. PMLR, 2021.

\bibitem[Junges et~al.(2016)Junges, Jansen, Dehnert, Topcu, and
  Katoen]{junges2016safety}
Sebastian Junges, Nils Jansen, Christian Dehnert, Ufuk Topcu, and Joost-Pieter
  Katoen.
\newblock Safety-constrained reinforcement learning for mdps.
\newblock In \emph{International Conference on Tools and Algorithms for the
  Construction and Analysis of Systems}, pages 130--146. Springer, 2016.

\bibitem[Kuo et~al.(2020)Kuo, Katz, and Barbu]{Kuo2020EncodingFA}
Yen-Ling Kuo, B.~Katz, and A.~Barbu.
\newblock Encoding formulas as deep networks: Reinforcement learning for
  zero-shot execution of ltl formulas.
\newblock \emph{IEEE/RSJ International Conference on Intelligent Robots and
  Systems (IROS)}, pages 5604--5610, 2020.

\bibitem[Levine et~al.(2016)Levine, Finn, Darrell, and Abbeel]{levine2016end}
Sergey Levine, Chelsea Finn, Trevor Darrell, and Pieter Abbeel.
\newblock End-to-end training of deep visuomotor policies.
\newblock \emph{The Journal of Machine Learning Research}, 17\penalty0
  (1):\penalty0 1334--1373, 2016.

\bibitem[Li et~al.(2017)Li, Vasile, and Belta]{li2017reinforcement}
Xiao Li, Cristian-Ioan Vasile, and Calin Belta.
\newblock Reinforcement learning with temporal logic rewards.
\newblock In \emph{IEEE/RSJ International Conference on Intelligent Robots and
  Systems (IROS)}, pages 3834--3839. IEEE, 2017.

\bibitem[Littman et~al.(2017)Littman, Topcu, Fu, Isbell, Wen, and
  MacGlashan]{littman2017environmentindependent}
Michael~L. Littman, Ufuk Topcu, Jie Fu, Charles Isbell, Min Wen, and James
  MacGlashan.
\newblock Environment-independent task specifications via gltl, 2017.

\bibitem[Lowe et~al.(2017)Lowe, Wu, Tamar, Harb, Abbeel, and
  Mordatch]{lowe2017multi}
Ryan Lowe, Yi~Wu, Aviv Tamar, Jean Harb, Pieter Abbeel, and Igor Mordatch.
\newblock Multi-agent actor-critic for mixed cooperative-competitive
  environments.
\newblock \emph{arXiv preprint arXiv:1706.02275}, 2017.

\bibitem[Mania et~al.(2018)Mania, Guy, and Recht]{mania2018simple}
Horia Mania, Aurelia Guy, and Benjamin Recht.
\newblock Simple random search of static linear policies is competitive for
  reinforcement learning.
\newblock In \emph{Advances in Neural Information Processing Systems}, pages
  1805--1814, 2018.

\bibitem[Neary et~al.(2021)Neary, Xu, Wu, and Topcu]{neary2021reward}
Cyrus Neary, Zhe Xu, Bo~Wu, and Ufuk Topcu.
\newblock Reward machines for cooperative multi-agent reinforcement learning,
  2021.

\bibitem[Sun et~al.(2020)Sun, Wu, and Lim]{sun2020program}
Shao-Hua Sun, Te-Lin Wu, and Joseph~J. Lim.
\newblock Program guided agent.
\newblock In \emph{International Conference on Learning Representations}, 2020.

\bibitem[Toro~Icarte et~al.(2020)Toro~Icarte, Klassen, Valenzano, and
  McIlraith]{tor-etal-arxiv20}
Rodrigo Toro~Icarte, Toryn~Q. Klassen, Richard Valenzano, and Sheila~A.
  McIlraith.
\newblock Reward machines: Exploiting reward function structure in
  reinforcement learning.
\newblock \emph{arXiv preprint arXiv:2010.03950}, 2020.

\bibitem[Vaezipoor et~al.(2021)Vaezipoor, Li, Icarte, and
  McIlraith]{vaezipoor2021ltl2action}
Pashootan Vaezipoor, Andrew Li, Rodrigo~Toro Icarte, and Sheila McIlraith.
\newblock Ltl2action: Generalizing ltl instructions for multi-task rl.
\newblock \emph{arXiv preprint arXiv:2102.06858}, 2021.

\bibitem[Verma et~al.(2018)Verma, Murali, Singh, Kohli, and
  Chaudhuri]{pmlr-v80-verma18a}
Abhinav Verma, Vijayaraghavan Murali, Rishabh Singh, Pushmeet Kohli, and Swarat
  Chaudhuri.
\newblock Programmatically interpretable reinforcement learning.
\newblock In \emph{International Conference on Machine Learning}, pages
  5045--5054, 2018.

\bibitem[Verma et~al.(2019)Verma, Le, Yue, and Chaudhuri]{verma2019imitation}
Abhinav Verma, Hoang~M Le, Yisong Yue, and Swarat Chaudhuri.
\newblock Imitation-projected programmatic reinforcement learning.
\newblock In \emph{Advances in Neural Information Processing Systems}, 2019.

\bibitem[Xu and Topcu(2019)]{ijcai2019-0557}
Zhe Xu and Ufuk Topcu.
\newblock Transfer of temporal logic formulas in reinforcement learning.
\newblock In \emph{International Joint Conference on Artificial Intelligence},
  pages 4010--4018, 7 2019.

\bibitem[Yuan et~al.(2019)Yuan, Hasanbeig, Abate, and
  Kroening]{yuan2019modular}
Lim~Zun Yuan, Mohammadhosein Hasanbeig, Alessandro Abate, and Daniel Kroening.
\newblock Modular deep reinforcement learning with temporal logic
  specifications.
\newblock \emph{arXiv preprint arXiv:1909.11591}, 2019.

\end{thebibliography}
\bibliographystyle{plainnat}


\appendix
\clearpage
\setcounter{page}{1}

\section{Reduction to Abstract Reachability}\label{sec:reduction}
In this section, we detail the construction of the abstract graph $\G_{\p}$ from a \toolname specification $\p$. Given two sets of finite trajectories $\traj_1,\traj_2\subseteq\traj_f$, let us denote by $\traj_1\circ\traj_2$ the concatenation of the two sets---i.e.,
\begin{align*}
\traj_{1}\circ\traj_{2} =
\left\{\zeta\in\traj_f \biggm\vert
\begin{array}{c}
\exists i< t\;.\; \zeta_{0:i}\in\traj_{1} \\
\wedge \; \zeta_{(i+1):t}\in\traj_{2}
\end{array}\right\}.
\end{align*}
In addition to the abstract graph $\G = (U, E, u_0, F,\beta,\traj_\safe)$ we also construct a set of safe terminal trajectories $\traj_{\term} = \bigcup_{u\in F}\traj_{\term}^u$ where $\traj_{\term}^u\subseteq\traj_f$ is the set of terminal trajectories for the final vertex $u\in F$. Now, we define what it means for a finite trajectory $\zeta$ to satisfy the pair $(\G, \traj_\term)$.

\begin{definition}
\rm
A finite trajectory $\zeta=s_0\xrightarrow{a_0}s_1\xrightarrow{a_1}\cdots\xrightarrow{a_{t-1}}s_t$ in $\M$ satisfies the pair $(\G,\traj_\term)$ (denoted $\zeta\models (\G,\traj_\term)$) if there is a sequence of indices $0=i_0\leq i_1<\cdots<i_k\leq t$ and a path $\rho=u_0\to u_1\to\cdots\to u_k$ in $\G$ such that
\begin{itemize}[topsep=0pt,itemsep=0ex,partopsep=1ex,parsep=1ex]
\item $u_k\in F$,
\item for all $j\in\{0,\ldots,k\}$, we have $s_{i_j}\in \beta(u_j)$,
\item for all $j < k$, letting $e_j=u_j\to u_{j+1}$, we have $\zeta_{i_j:i_{j+1}}\in\traj_{\safe}^{e_j}$, and
\item $\zeta_{i_k:t}\in\traj_\term^{u_k}$.
\end{itemize}
\end{definition}

We now outline the inductive construction of the pair $(\G_\p,\traj_{\term,\p})$ from a specification $\p$ such that any finite trajectory $\zeta\in\traj_f$ satisfies $\p$ if and only if $\zeta$ satisfies $(\G_\p,\traj_{\term,\p})$.


\textbf{Objectives} ($\p = \eventually{b}$).
The abstract graph is $\G_{\p} = (U, E, u_0, F, \beta,\traj_{\safe})$ where
\begin{itemize}
\item $U = \{u_0, u_b\}$ with $\beta(u_0)=S$ and $\beta(u_b) = S_b = \{s\mid s\models b\}$,
\item $E = \{u_0\to u_b\}$,
\item $F = \{u_b\}$ and,
\item $\traj_{\safe}^{(u_0,u_b)} = \traj_{\term}^{u_b} = \traj_f$.
\end{itemize}

\textbf{Constraints} ($\p = \p_1\always{b}$). Let
the abstract graph for $\p_1$ be $\G_{\p_1} = (U_1,E_1,u_{0}^1,F_1,\beta_1,\traj_{\safe,1})$ and the terminal trajectories be $\traj_{\term, 1}$. Then, the abstract graph for $\p$ is $\G_{\p} = (U,E,u_{0},F,\beta,\traj_{\safe})$ where
\begin{itemize}
\item $U = U_1$, $u_0 = u_0^1$, $E=E_1$ and $F=F_1$.
\item $\beta(u) = \beta_1(u)\cap S_b$ for all $u\in U\setminus\{u_0\}$ where $S_b = \{s\mid s\models b\}$, and $\beta(u_0) = S$.
\item $\traj_{\safe}^{e} = \traj_{\safe,1}^e\cap\traj_b$ for all $e\in E$ where
$$\traj_{b} = \{\zeta\in\traj_f\mid \forall i\;.\; s_i\models b\}.$$
\item $\traj_{\term}^{u} = \traj_{\term,1}^u\cap\traj_b$ for all $u\in F$.
\end{itemize}


\textbf{Sequencing} ($\p = \p_1;\p_2$). Let the abstract graph for $\p_i$ be
$\G_{\p_{i}} = (U_i,E_i, u_{0}^i,F_i,\beta_i,\traj_{\safe,i})$ and the terminal trajectories be $\traj_{\term,i}$ for $i\in\{1,2\}$.
The abstract graph $\G_{\p} = (U, E, u_0, F, \beta,\traj_{\safe})$ is constructed as follows.
\begin{itemize}
\item $U = U_1\sqcup U_2 \setminus \{u_{0}^2\}$.
\item $E = E_1\sqcup E_2'\sqcup E_{1\to 2}$ where
$$E_2' = \{u\to u'\in E_2\mid u\neq u_0^2\}\quad\text{and}$$
$$E_{1\to 2} = \{u^1\to u^2\mid u^1\in F_1\ \&\ u_0^2\to u^2\in E_2\}.$$
\item $u_0 = u_0^1$ and $F = F_2$.
\item $\beta(u) = \beta_i(u)$ for all $u\in U_i$ and $i\in\{1,2\}$.
\item The safe trajectories are given by
\begin{itemize}
\item $\traj_{\safe}^e = \traj_{\safe,1}^e$ for all $e \in E_1$,
\item $\traj_{\safe}^e = \traj_{\safe,2}^e$ for all $e\in E_{2}'$ and,
\item $\traj_{\safe}^{u^1\to u^2} = \traj_{\term,1}^{u^1}\circ\traj_{\safe,2}^{u_0^2\to u^2}$ for all $u^1\to u^2\in E_{1\to 2}$.
\end{itemize}
\item $\traj_{\term}^u = \traj_{\term,2}^u$ for all $u \in F$.
\end{itemize}

\textbf{Choice} ($\p = \choice{\p_1}{\p_2}$). Let the abstract graph for $\p_i$ be $\G_{\p_{i}} = (U_i,E_i, u_{0}^i,F_i,\beta_i,\traj_{\safe,i})$ and the terminal trajectories be $\traj_{\term,i}$ for $i\in\{1,2\}$.
The abstract graph for $\p$ is $\G_{\p} = (U, E, u_0, F,\beta, \traj_{\safe})$ where:
\begin{itemize}
\item $U = \Big(U_1\setminus\{u_0^1\}\Big)\sqcup \Big(U_2\setminus\{u_0^2\}\Big)\sqcup\{u_0\}$.
\item $E = E_1'\sqcup E_2'\sqcup E_{0}$ where $$E_i' = \{u\to u'\in E_i\mid u\neq u_0^i\}\quad \text{and}$$ $$E_{0} = \{u_0\to u^i\mid i\in\{1,2\}\ \&\ u_0^i\to u^i\in E_i\}.$$
\item $F = F_1\sqcup F_2$.
\item $\beta(u) = \beta_i(u)$ for all $u\in U_i$, $i\in\{1,2\}$ and $\beta(u_0) = S$.
\item The safe trajectories are given by
\begin{itemize}
\item $\traj_{\safe}^e = \traj_{\safe,i}^e$ for all $e\in E_i'$ and $i\in\{1,2\}$,
\item $\traj_{\safe}^{u_0\to u^i} = \traj_{\safe,i}^{u_0^i\to u^i}$ for all $u_0\to u^i\in E_0$ with $u^i \in U_i$.
\end{itemize}
\item $\traj_{\term}^u = \traj_{\term,i}^u$ for all $u\in F_i$ and $i\in\{1,2\}$.
\end{itemize}

The constructed pair $(\G_{\p},\traj_{\term,\p})$ has the following important properties.

\begin{lemma}\label{lem:finite_eq}
For any \toolname specification $\p$, the following hold.
\begin{itemize}
    \item For any finite trajectory $\zeta\in\traj_f$, $\zeta\models\p$ if and only if $\zeta\models(\G_{\p},\traj_{\term,\p})$.
    \item For any final vertex $u$ of $\G_\p$ and any state $s\in\beta(u)$, the length-1 trajectory $\zeta=s$ is contained in $\traj_{\term,\p}^u$.
\end{itemize}
\end{lemma}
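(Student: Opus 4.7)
The plan is to prove both parts of the lemma simultaneously by structural induction on the \toolname specification $\p$, following the four cases of the construction: $\eventually{b}$, $\p_1\always{b}$, $\p_1;\p_2$, and $\choice{\p_1}{\p_2}$. The key observation that makes Part 2 easy is that in every case the construction either sets $\traj_{\term}^u = \traj_f$ (base case) or inherits $\traj_{\term}^u$ from a sub-specification's terminal trajectories (possibly intersected with $\traj_b$ in the constraint case), and final vertices and their subgoal regions always descend from the sub-specifications. So I will handle Part 2 as a side-remark inside each inductive step. For Part 1, the base case $\p = \eventually{b}$ is immediate: $\zeta \models \p$ iff some $s_i \in S_b = \beta(u_b)$, and since $\traj_{\safe}^{u_0\to u_b} = \traj_{\term}^{u_b} = \traj_f$, I just choose $i_0 = 0$ and $i_1 = i$ with path $u_0\to u_b$; the converse reads the witness backwards. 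For Part 2 here, $\traj_{\term}^{u_b} = \traj_f$ trivially contains any length-1 trajectory.

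For the constraint case $\p = \p_1\always{b}$, I apply the inductive hypothesis to $\p_1$ and then simply intersect. In the forward direction, a witness $(i_0,\dots,i_k,\rho)$ for $\zeta \models (\G_{\p_1},\traj_{\term,1})$ still works for $\G_{\p}$: each $s_{i_j}$ with $j>0$ lies in $\beta_1(u_j)\cap S_b = \beta(u_j)$ because every state of $\zeta$ satisfies $b$, and every required subtrajectory belongs to $\traj_{\safe,1}^{e}\cap\traj_b = \traj_{\safe}^{e}$, with the terminal piece in $\traj_{\term,1}^{u_k}\cap\traj_b = \traj_{\term}^{u_k}$. The reverse direction uses that the subtrajectories and terminal piece in the witness together cover all states of $\zeta$ and each lies in $\traj_b$, so every state satisfies $b$. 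Part 2 is immediate since $s\in\beta(u)$ implies $s\models b$, so $\zeta = s \in \traj_b$, combined with the inductive Part 2 for $\p_1$. The choice case $\p = \choice{\p_1}{\p_2}$ is then a short bookkeeping argument: a witness path in $\G_\p$ starts with an edge $u_0 \to u^i$ for exactly one $i\in\{1,2\}$, whose safe trajectories are inherited verbatim from the edge $u_0^i\to u^i$ in $\G_{\p_i}$; prepending $u_0^i$ to this witness produces a witness in $\G_{\p_i}$, and vice versa, while $F = F_1 \sqcup F_2$ and the definitions of $\beta$ and $\traj_\term$ align exactly with Part 2 on each side.

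I expect the sequencing case $\p = \p_1;\p_2$ to be the main obstacle, because the construction merges graphs by deleting $u_0^2$ and splicing its outgoing edges onto the final vertices of $\G_{\p_1}$ using the concatenation $\traj_{\term,1}^{u^1}\circ\traj_{\safe,2}^{u_0^2\to u^2}$. Forward direction: given $\zeta\models\p$, pick the split index $i$ with $\zeta_{0:i}\models\p_1$ and $\zeta_{(i+1):t}\models\p_2$; apply the inductive hypothesis to get a witness $(i_0,\dots,i_{k_1}, \rho_1)$ for $\zeta_{0:i}\models(\G_{\p_1},\traj_{\term,1})$ ending at $u^1\in F_1$, and a witness $(j_0,\dots,j_{k_2}, \rho_2)$ for $\zeta_{(i+1):t}\models(\G_{\p_2},\traj_{\term,2})$ starting at $u_0^2$ with first edge $u_0^2\to u^2$. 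I splice: keep the first portion of indices, replace the junction by the single edge $u^1\to u^2 \in E_{1\to 2}$, and shift the second portion of indices by $i+1$. The subtrajectory spanning the junction, $\zeta_{i_{k_1}:(i+1)+j_1}$, decomposes as $\zeta_{i_{k_1}:t}$'s first piece (a terminal trajectory in $\traj_{\term,1}^{u^1}$) concatenated with the second piece (in $\traj_{\safe,2}^{u_0^2\to u^2}$), which by definition lies in $\traj_{\safe}^{u^1\to u^2}$. The terminal piece at $u_{k_2}$ inherits from $\traj_{\term,2}^{u_{k_2}} = \traj_{\term}^{u_{k_2}}$. The reverse direction is the trickier one: given a witness for $\zeta\models(\G_\p,\traj_{\term,\p})$, locate the unique edge along the path that lies in $E_{1\to 2}$ (such an edge exists because $F_\p = F_2$ and $u_0 = u_0^1$), use the definition of $\circ$ to split the corresponding subtrajectory at some index $i$ into a $\traj_{\term,1}^{u^1}$ piece and a $\traj_{\safe,2}^{u_0^2\to u^2}$ piece, and reassemble witnesses for $\zeta_{0:i}\models(\G_{\p_1},\traj_{\term,1})$ and $\zeta_{(i+1):t}\models(\G_{\p_2},\traj_{\term,2})$, prepending $u_0^2$ to the latter; then apply the inductive hypothesis to each. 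Care is needed with index arithmetic at the splice, with the strictness $i_{j}<i_{j+1}$ (ensured because the $\circ$ requires $j<t$), and with the edge case where $u^2$ itself is already a final vertex of $\G_{\p_2}$, in which case the second witness has length one and reduces to applying the inductive Part 2 of the lemma.
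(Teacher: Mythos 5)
Your proposal is correct and follows essentially the same route as the paper, which proves Lemma~\ref{lem:finite_eq} by exactly this structural induction on $\p$ over the four cases of the construction (the paper merely states ``follows from the above construction by structural induction'' without writing out the details). Your elaboration of the sequencing case---splicing the two witnesses across the unique $E_{1\to 2}$ edge via the definition of $\circ$---is the right way to fill in the step the paper leaves implicit.
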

\begin{proof}
Follows from the above construction by structural induction on $\p$.
\end{proof}

\begin{proof}[Proof of Theorem~\ref{thm:reduction}]
Let $\zeta = s_0\xrightarrow{a_0}s_1\xrightarrow{a_1}\cdots$ be an infinite trajectory. First we show that $\zeta\models\p$ if and only if $\zeta\models\G_\p$.

($\implies$) Suppose $\zeta\models\p$. Then, there is a $t\geq 0$ such that $\zeta_{0:t}\models\p$. From Lemma~\ref{lem:finite_eq}, we get that $\zeta_{0:t}\models(\G_{\p},\traj_{\term,\p})$ which implies that $\zeta\models\G_\p$.

($\impliedby$) Suppose $\zeta\models\G_\p$. Then, let $0=i_0\leq i_1 < \cdots < i_k$ be a sequence of indices realizing a path $u_0\to\cdots\to u_k$ to a final vertex $u_k$ in $\G_\p$. Since $s_{i_k}\in\beta(u_k)$, from Lemma~\ref{lem:finite_eq} we have $\zeta_{i_k:i_k}\in\traj_{\term,\p}^{u_k}$ and hence $\zeta_{0:i_k}\models(\G_{\p},\traj_{\term,\p})$. From Lemma~\ref{lem:finite_eq} we conclude that $\zeta_{0:i_k}\models\p$ and therefore $\zeta\models\p$.

Next, it follows by a straightforward induction on $\p$ that the number of vertices in $\G_\p$ is at most $|\p|+1$ where $|\p|$ is the number of operators (\code{achieve}, \code{ensuring}, ;, \code{or}) in $\p$.
\end{proof}
\section{Shaped Rewards for Learning Policies}
\label{Ap:shaped_rewards}

To improve learning, we use shaped rewards for learning each edge policy $\pi_e$. To enable reward shaping, we assume that the atomic predicates additionally have a \emph{quantitative semantics}---i.e., each atomic predicate $p\in\P_0$ is associated with a function $\semantics{p}_q:S\to\mathbb{R}$. To ensure compatibility with the Boolean semantics, we assume that
\begin{align}
\semantics{p}(s)=\big(\semantics{p}_q(s)>0\big).
\label{eqn:quantitativetoboolean}
\end{align}
For example, given a state $s\in S$, the atomic predicate
\begin{align*}
\semantics{\reach s}_q(s') ~=~ 1 - \|s' - s\|
\end{align*}
indicates whether the system is in a state near $s$ w.r.t. some norm $\|\cdot\|$. In addition, we can extend the quantitative semantics to predicates $b\in\mathcal{P}$ by recursively defining $\semantics{b_1\wedge b_2}_q(s)=\min\{\semantics{b_1}_q(s),\semantics{b_2}_q(s)\}$ and $\semantics{b_1\vee b_2}_q(s)=\max\{\semantics{b_1}_q(s),\semantics{b_2}_q(s)\}$. These definitions are a standard extension of Boolean logic to real values. In particular, they preserve (\ref{eqn:quantitativetoboolean})---i.e., $b\models s$ if and only if $\semantics{b}_q(s)>0$.

In addition to quantitative semantics, we make use of the following property to define shaped rewards. 

\begin{lemma}\label{lem:g_property}
The abstract graph $\G_\p = (U, E,u_0, F, \beta,\traj_{\safe})$ of a specification $\p$ satisfies the following:
\begin{itemize}
\item For every non-initial vertex $u\in U\setminus\{u_0\}$, there is a predicate $b\in\P$ such that $\beta(u) = S_b = \{s\mid s\models b\}$.
\item For every $e\in E$, either $\traj_{\safe}^e = \traj_{b} = \{\zeta\in\traj\mid \forall i\;.\; s_i\models b\}$ for some $b\in\P$ or $\traj_{\safe}^e = \traj_{b_1}\circ\traj_{b_2}$ for some $b_1,b_2\in\P$.
\end{itemize}
\end{lemma}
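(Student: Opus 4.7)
My plan is to prove both bullets by structural induction on $\p$, following the four cases of the construction of the pair $(\G_\p,\traj_{\term,\p})$ given in Appendix~\ref{sec:reduction}. The naive inductive hypothesis (just the two stated bullets) does not close: in the sequencing step, the new edges $u^1\to u^2\in E_{1\to 2}$ receive $\traj_{\safe}^{u^1\to u^2}=\traj_{\term,1}^{u^1}\circ\traj_{\safe,2}^{u_0^2\to u^2}$, so if either factor were already a concatenation $\traj_{b}\circ\traj_{b'}$ we would get a triple concatenation, which is forbidden by the second bullet of the lemma. So the first key move is to \emph{strengthen} the IH with three extra invariants carried through the induction: (a) every terminal set has the simple shape $\traj_{\term,\p}^u=\traj_b$ for some $b\in\P$; (b) every edge leaving $u_0$ has $\traj_{\safe}^{u_0\to u}=\traj_b$ for some $b\in\P$, never a concatenation; and (c) $u_0\notin F$.

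With (a)--(c) added, three of the four cases become routine checks against the construction. For $\p=\eventually{b}$ the only non-initial vertex is $u_b$ with $\beta(u_b)=S_b$; the only edge and the only terminal set both equal $\traj_f=\traj_\true$; and $u_0\ne u_b$, so every invariant holds. For $\p_1\always{b}$ I would use the closure identities $S_{b_1}\cap S_b=S_{b_1\wedge b}$, $\traj_{b_1}\cap\traj_b=\traj_{b_1\wedge b}$, and the distributive identity $(\traj_{b_1}\circ\traj_{b_2})\cap\traj_b=\traj_{b_1\wedge b}\circ\traj_{b_2\wedge b}$, which holds because the constraint ``every state satisfies $b$'' splits across the concatenation point; intersecting with $\traj_b$ and $S_b$ then preserves every invariant. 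For choice, the fresh vertex $u_0$ is not in $F$, and each of its outgoing edges inherits its safe trajectories from an edge leaving $u_0^1$ or $u_0^2$, which by invariant (b) applied to $\p_1$ or $\p_2$ already has the simple form $\traj_b$; all other edges, terminal sets and $\beta$-values are copied verbatim from the subgraphs.

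The only case that truly exercises the strengthening is sequencing $\p=\p_1;\p_2$. For a new edge $u^1\to u^2\in E_{1\to 2}$ with $u^1\in F_1$, invariant (a) applied to $\p_1$ gives $\traj_{\term,1}^{u^1}=\traj_{b_1}$, and invariant (b) applied to $\p_2$ (legitimate because $u_0^2\to u^2$ leaves the initial vertex of $\G_{\p_2}$) gives $\traj_{\safe,2}^{u_0^2\to u^2}=\traj_{b_2}$, so $\traj_{\safe}^{u^1\to u^2}=\traj_{b_1}\circ\traj_{b_2}$ lands in exactly the allowed form. I would then re-establish (a)--(c) for $\G_\p$: terminal sets are copied from $\G_{\p_2}$, so (a) is inherited; edges leaving $u_0=u_0^1$ are exactly the $E_1$-edges leaving $u_0^1$---no $E_{1\to 2}$-edge leaves $u_0^1$, because invariant (c) for $\p_1$ gives $u_0^1\notin F_1$---so (b) is inherited; and $F=F_2$ is disjoint from $u_0^1\in U_1$ by the disjoint-union construction. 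The main obstacle, as above, is noticing that (a)--(c) are needed; once they are in place the case analysis is mechanical. A minor technicality is that representing $\traj_f$ as $\traj_\true$ uses a tautological predicate, which I would assume available in $\P$ (or silently add to $\P_0$) to match the statement.
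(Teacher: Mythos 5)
Your proposal is correct and follows essentially the same route as the paper: the paper's proof is exactly a structural induction on $\p$ with the induction hypothesis strengthened to require $\traj_{\term,\p}^u=\traj_b$ for final vertices and $\traj_{\safe}^e=\traj_b$ for edges out of $u_0$, which are your invariants (a) and (b). Your additional invariant (c), that $u_0\notin F$, and the distributivity identity $(\traj_{b_1}\circ\traj_{b_2})\cap\traj_b=\traj_{b_1\wedge b}\circ\traj_{b_2\wedge b}$ are details the paper leaves implicit in its ``straightforward induction,'' and you work them out correctly.
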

\begin{proof}[Proof sketch]
We prove a stronger property that, in addition to the above, requires that for any $e = u_0\to u\in E$, $\traj_{\safe}^e = \traj_{b}$ for some $b\in\P$ and for any final vertex $u$, $\traj_{\term,\p}^u = \traj_{b}$ for some $b\in\P$. This stronger property follows from a straightforward induction on $\p$.
\end{proof}

Next, we describe the shaped rewards we use to learn an edge $e=u\to u'$ in $\G_\p$, which have the form
\begin{align*}
R_{\text{step}}(s,a,s') = R_{\text{reach}}(s,a,s') + R_{\text{safe}}(s,a,s').
\end{align*}
Intuitively, the first term encodes a reward for reaching $\beta(u')$, and the second term encodes a reward for maintaining safety.
By Lemma~\ref{lem:g_property}, $\beta(u') = S_b$ for some $b\in\P$. Then, we define
\begin{align*}
R_{\text{reach}}(s,a,s') = \semantics{b}_q(s').
\end{align*}
The safety reward is defined by
\begin{align*}
R_{\text{safe}}(s,a,s') 
&=\begin{cases}
\min\{0, \semantics{b}_q(s')\} & \text{if }\traj_{\safe}^e=\traj_{b} \\
\min\{0, \semantics{b\lor b'}_q(s')\} & \text{if }\traj_\safe^e=\traj_{b}\circ\traj_{b'}\wedge\psi_b\\
\min\{0, \semantics{b'}_q(s')\} & \text{if }\traj_\safe^e=\traj_{b}\circ\traj_{b'}\wedge\neg\psi_b.
\end{cases}
\end{align*}
Here, $\psi_b$ is internal state keeping track of whether $b$ has held so far---i.e., $\psi_b\gets\psi_b\wedge\semantics{b}(s)$ at state $s$. Intuitively, the first case is the simpler case, which checks if every state in the trajectory satisfies $b$, and the latter two cases handle a sequence where $b$ should hold for the first part of the trajectory, and $b'$ should hold for the remainder.

\section{Proof of Theorem~\ref{thm:main}}\label{sec:proofs}

\begin{proof} Let the abstract graph be $\G = (U,E,u_0,F,\beta,\traj_\safe)$. Let us first define what it means for a rollout to achieve a path in $\G$.

\begin{definition} We say that an infinite trajectory $\zeta$ achieves the path $\rho$ (denoted $\zeta\models\rho$) if $\zeta\models\G_{\rho}$ where $\G_{\rho} = (U_\rho, E_{\rho}, u_0, \{u_k\},\beta\downarrow\rho, \traj_\safe\downarrow_\rho)$ with $U_{\rho} = \{u_j ~\mid~ 0\leq j\leq k\}$, $E_{\rho} = \{u_j\rightarrow u_{j+1} ~\mid~ 0\leq j < k\}$ and $\beta\downarrow\rho$ and $\traj_{\safe}\downarrow_\rho$ are $\beta$ and $\traj_\safe$ restricted to the vertices and the edges of $\G_{\rho}$, respectively.
\end{definition}

From the definition it is clear that for any infinite trajectory $\zeta$, if $\zeta\models\rho$ then $\zeta\models\G$ and therefore
\begin{align}\label{eqn:bound1}
    \Pr_{\zeta\sim\D_{\pi_{\rho}}}[\zeta\models\G]\geq\Pr_{\zeta\sim\D_{\pi_{\rho}}}[\zeta\models\rho].
\end{align}
Let us now define a slightly stronger notion of achieving an edge.
\begin{definition}
An infinite trajectory $\zeta=s_0\rightarrow s_1\rightarrow\cdots$ is said to greedily achieve the path $\rho$ (denoted $\zeta\models_g\rho$) if there is a sequence of indices $0=i_0\leq i_1<\cdots<i_k$ such that for all $j<k$,
\begin{itemize}
\item $\zeta_{i_j:\infty}\models e_j=u_j\rightarrow u_{j+1}$ and,
\item $i_{j+1} = i(\zeta_{i_j:\infty},e_j)$,
\end{itemize}
where $\zeta_{i_j:\infty}=s_{i_j}\rightarrow s_{i_j + 1}\rightarrow\cdots$.
\end{definition}
That is, $\zeta\models_g\rho$ if a partition of $\zeta$ realizing $\rho$ can be be constructed greedily by picking $i_{j+1}$ to be the smallest index $i \geq i_j$ (strictly bigger if $j>0$) such that $s_i\in \beta(u_{j+1})$ and $\zeta_{i_j:i}\in \traj_{\safe}^{e_j}$. Since $\zeta\models_g\rho$ implies $\zeta\models\rho$, we have
\begin{align}\label{eqn:bound2}
    \Pr_{\zeta\sim\D_{\pi_{\rho}}}[\zeta\models\rho]\geq\Pr_{\zeta\sim\D_{\pi_{\rho}}}[\zeta\models_g\rho].
\end{align}
Let $\rho_{j:k}$ denote the $j$-th suffix of $\rho$. We can decompose the probability $\Pr_{\zeta\sim\D_{\pi_{\rho}}}[\zeta\models_g\rho]$ as follows.
\begin{align*}
    \Pr_{\zeta\sim\D_{\pi_{\rho}}}[\zeta\models_g\rho] &=
    \Pr_{\zeta\sim\D_{\pi_{\rho}}}[\zeta\models e_0\ \wedge\ \zeta_{i(\zeta,e_0):\infty}\models_g \rho_{1:k}]\\
    &=\Pr_{\zeta\sim\D_{\pi_{e_0}}}[\zeta\models e_0]\cdot\Pr_{\zeta\sim\D_{\pi_{\rho}}}[\zeta_{i(\zeta,e_0):\infty}\models_g \rho_{1:k}\mid \zeta\models e_0]\\
    &=P(e_0;\pi_{e_0},\eta_0)\cdot \Pr_{s_0\sim\eta_{\rho_{0:1}},\zeta\sim\D_{\pi_{\rho_{1:k}},s_0}}[\zeta\models_g \rho_{1:k}]
\end{align*}
where the last equality followed from the definition of $\eta_{\rho_{0:1}}$ and the Markov property of $\M$. Applying the above decomposition recursively, we get
\begin{align*}
    \Pr_{\zeta\sim\D_{\pi_{\rho}}}[\zeta\models_g\rho] &= \prod_{j=0}^{k-1} P(e_j;\pi_{e_j},\eta_{\rho_{0:j}})\\
    &= \exp(\log(\prod_{j=0}^{k-1} P(e_j;\pi_{e_j},\eta_{\rho_{0:j}})))\\
    &= \exp(-(-\sum_{j=0}^{k-1}\log P(e_j;\pi_{e_j},\eta_{\rho_{0:j}})))\\
    &= \exp(-c(\rho)).
\end{align*}
Therefore, from Equations~\ref{eqn:bound1} and \ref{eqn:bound2}, we get the required bound.
\end{proof}

\section{Experimental Methodology}
\label{ap:methodology}

Our tool learns the low-level NN policies for edges using an off-the-shelf RL algorithm. For the Rooms and Fetch environments, we learn policies using ARS~\cite{mania2018simple} and TD3~\citep{fujimoto2018addressing} with shaped rewards, respectively.  

For each specification on an environment, we first construct its abstract graph. In \dirl, each edge policy $\pi_e$ is trained using $k$ episodes of interactions with the environment.
For the purpose of generating a learning curve, we run \dirl for each specification with several values of $k$. For each $k$ value, we plot the sum total of the samples taken to train all edge policies against the probability with which the computed policy reaches a final subgoal region. 

For a fair comparison with the baselines, if each episode for learning an edge policy in \dirl is run for $m$ steps, we run the episodes of the baselines for $m\cdot d + c$ steps, where $d$ is the maximum path length to reach a final vertex in the abstract graph of the specification and $c>0$ is a buffer. Intuitively, this approach ensures that all tools get a similar number of steps in each episode to learn the specification.

\begin{figure*}
\centering
\begin{subfigure}{0.3\textwidth}
\centering
\includegraphics[width=0.8\linewidth]{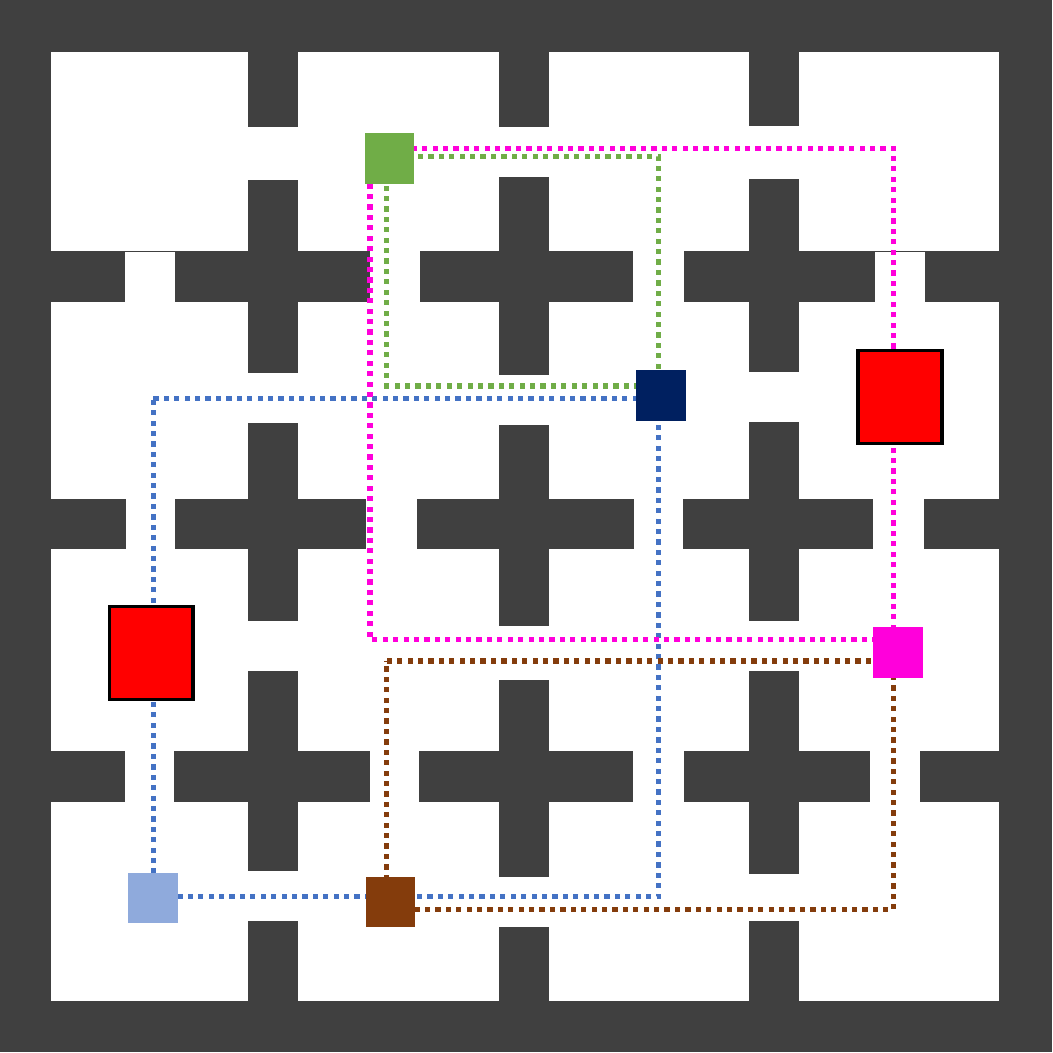}
\caption{16-Rooms (All doors open)}
\label{fig:16roomsopendoors}
\end{subfigure}
\begin{subfigure}{0.3\textwidth}
\centering
\includegraphics[width=0.8\linewidth]{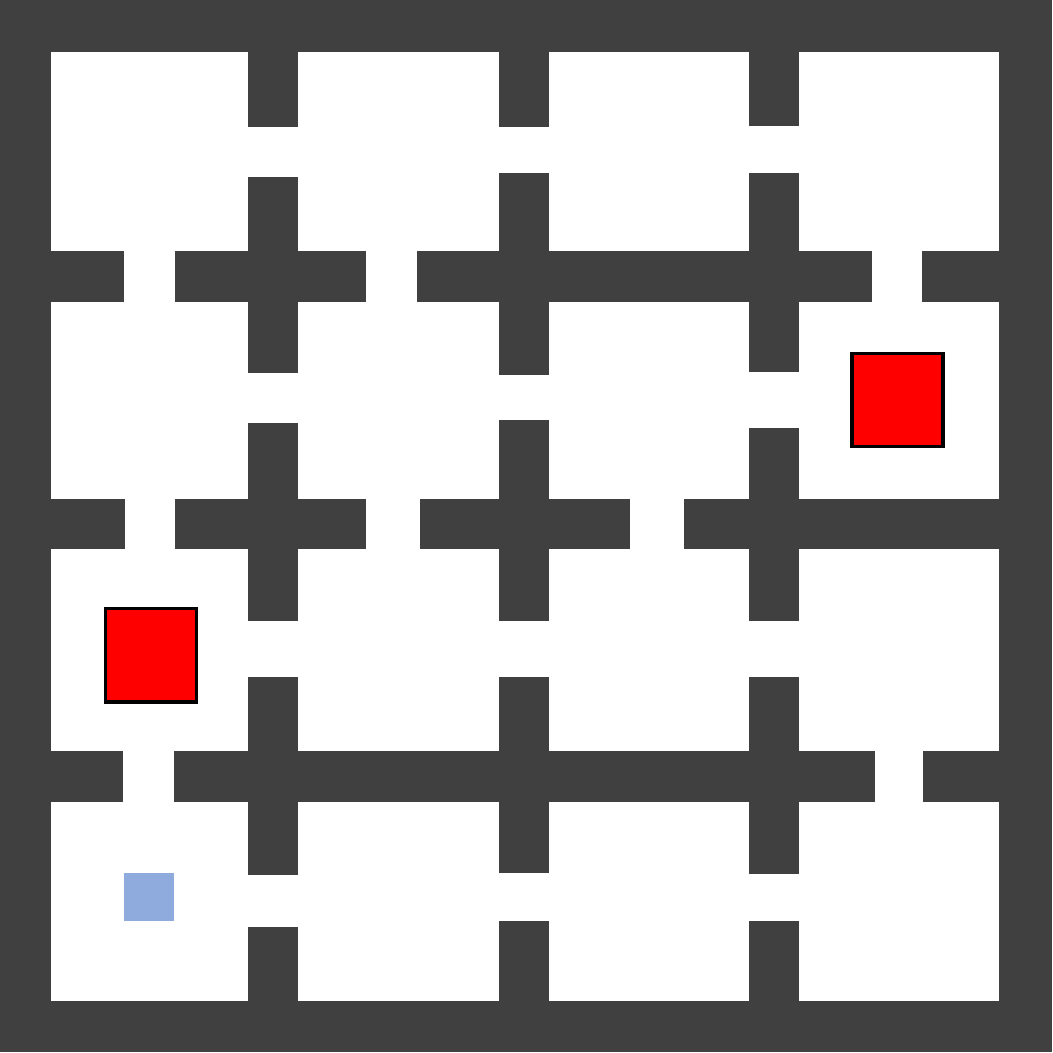}
\caption{16-Rooms (Some doors open)}
\label{fig:16rooms}
\end{subfigure}
\caption{16-Rooms Environments. Blue square indicates the initial room. Red squares represent obstacles. (a) illustrates the segments in the specifications. }
\label{Fig:RoomsEnv}
\end{figure*}

\section{Case Study: Rooms Environment}
\label{Ap:RoomsCaseStudy}

We consider environments with several interconnected rooms. The rooms are separated by thick walls and are connected through bi-directional doors. 

The environments are a 9-Rooms environment,   (\autoref{fig:Motivate}), a 16-Rooms environment with all doors open (\autoref{fig:16roomsopendoors}), and 16-Rooms environment with some doors open (\autoref{fig:16rooms}).
The red blocks indicate obstacles. A robot can pass through those rooms by moving around the red blocks. 
The robot is initially placed randomly in the center of the room with the blue box (bottom-left corner). 

Rooms are identified by the tuple $(r,c)$ denoting the room in the $r$-th row and $c$-th column. We use the convention that the bottom-left corner is room (0,0). Predicate $\reach{(r,c)}$ is interpreted as reaching the center of the $(r,c)$-th room and predicate $\avoid{(r,c)}$ is interpreted as avoiding the center of the $(r,c)$-th room. For clarity, we omit the word \code{achieve} from specifications of the form $\eventually{b}$ denoting such a specification using just the predicate $b$.

\subsection{9-Rooms Environment}

\textbf{Specifications.}
\begin{enumerate}
\item {$\p_1 :=$}
$\reach(2,0)$; $\reach(0,0)$

Go to the top-left corner and then return to the bottom-left corner (initial room); red blocks not considered obstacles.

This specification is difficult for standard RL algorithms that do not store whether the first sub-task has been achieved. In these cases, a stateless policy will not be able to determine whether to move upwards or downwards. In contrast, \dirl (as well as \toolname and RM based approaches) augment the state space to automatically keep track of which sub-tasks have been achieved so far.

\item{$\p_2 :=$}
$\choice{\reach(2,0)}\reach(0,2)$

Either go to the top-left corner or to the bottom-right corner (obstacles are not considered).

\item {$\p_3 :=$}
$\p_2; \reach(2,2)$

After completing $\p_2$, go to the top-right corner (obstacles not considered).

This specification combines two choices of similar difficulty yet only one is favorable to fulfilling the specification since the direct path to the top-right corner from the bottom-right one is obstructed by walls.

\item {$\p_4 :=$}
$\reach(2,0)\always$ $\avoid(1,0)$

Reach the top-left (while considering the obstacles).

\item {$\p_5 :=$}
$\choice{\p_4}\reach(0,2); $ $\reach(2,2)$

Either go to the top-left corner or bottom-right corner enroute to the top-right corner (while considering the obstacles).

This specification is similar to $\p_3$ except that the choices are of unequal difficulty due to the placement of the red obstacle. In this case, the non-greedy choice is favorable for completing the task. 

\end{enumerate}

\begin{figure*}[t]
\centering

\begin{subfigure}{0.3\textwidth}
\centering
\includegraphics[width=\textwidth]{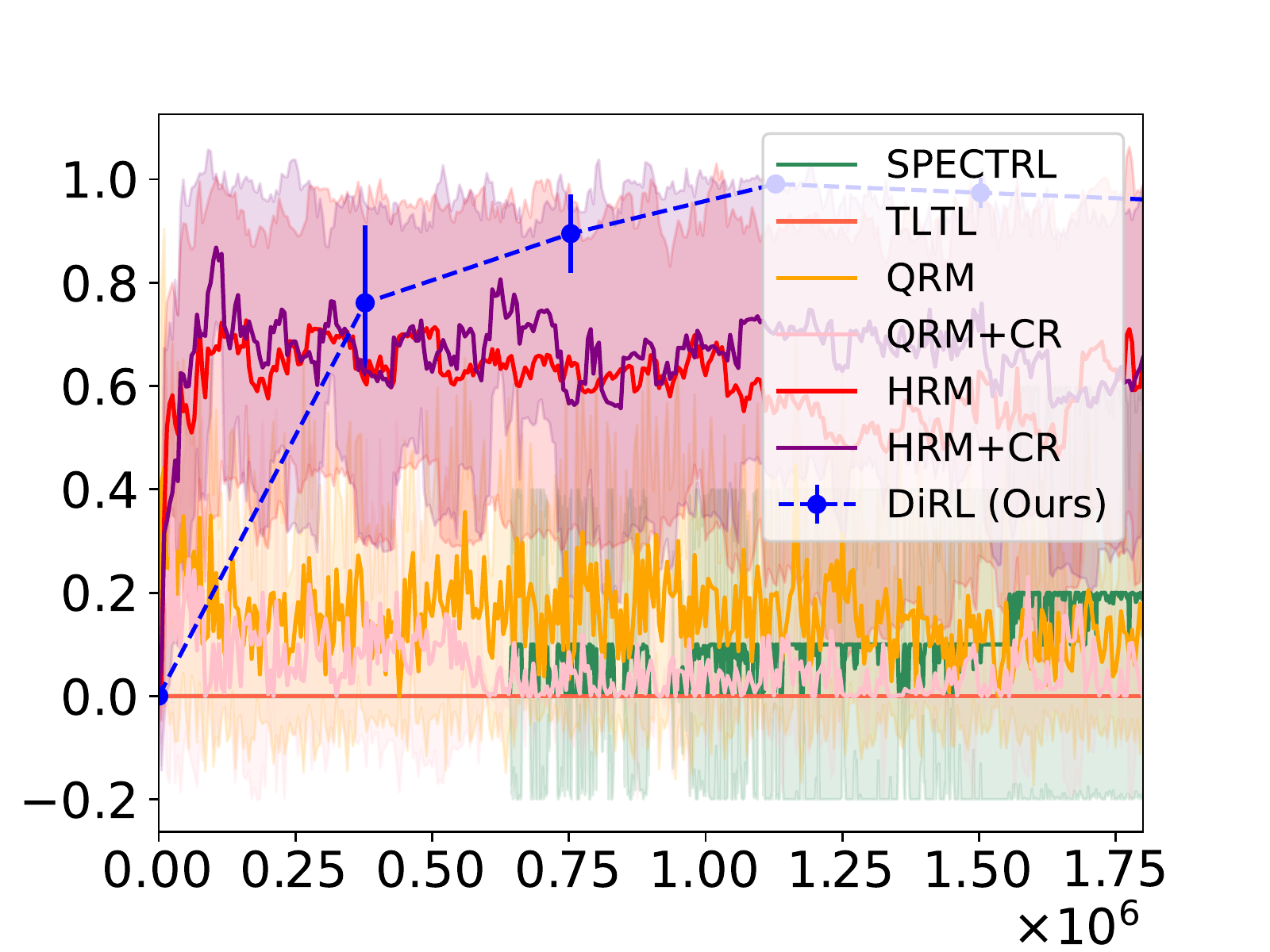}
\caption{Specification $\p_1$}
\label{fig:9rooms3}
\end{subfigure}
\hfill
\begin{subfigure}{0.3\textwidth}
\centering
\includegraphics[width=\textwidth]{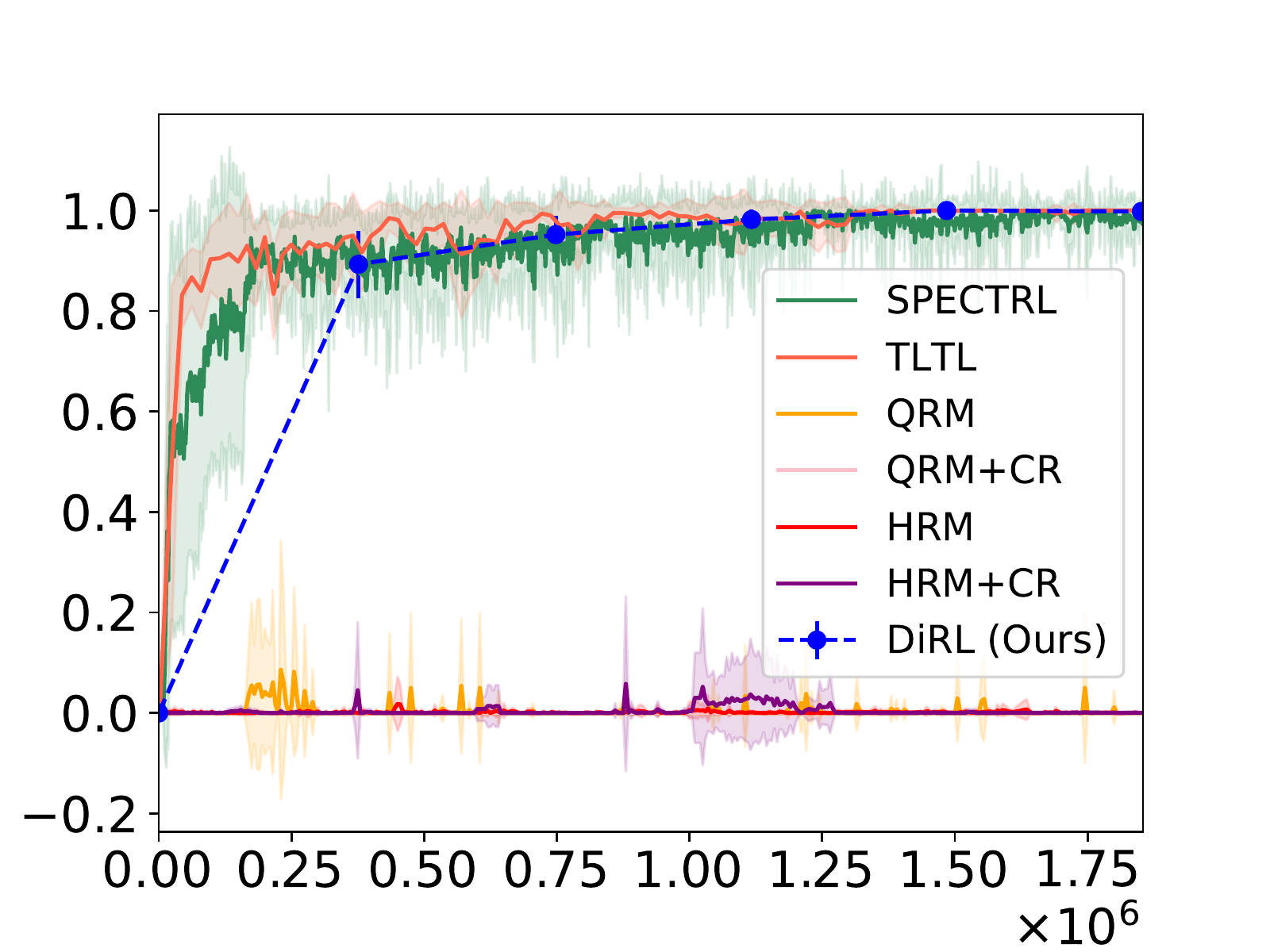}
\caption{Specification $\p_2$}
\label{fig:9rooms4}
\end{subfigure}
\hfill
\begin{subfigure}{0.3\textwidth}
\centering
\includegraphics[width=\textwidth]{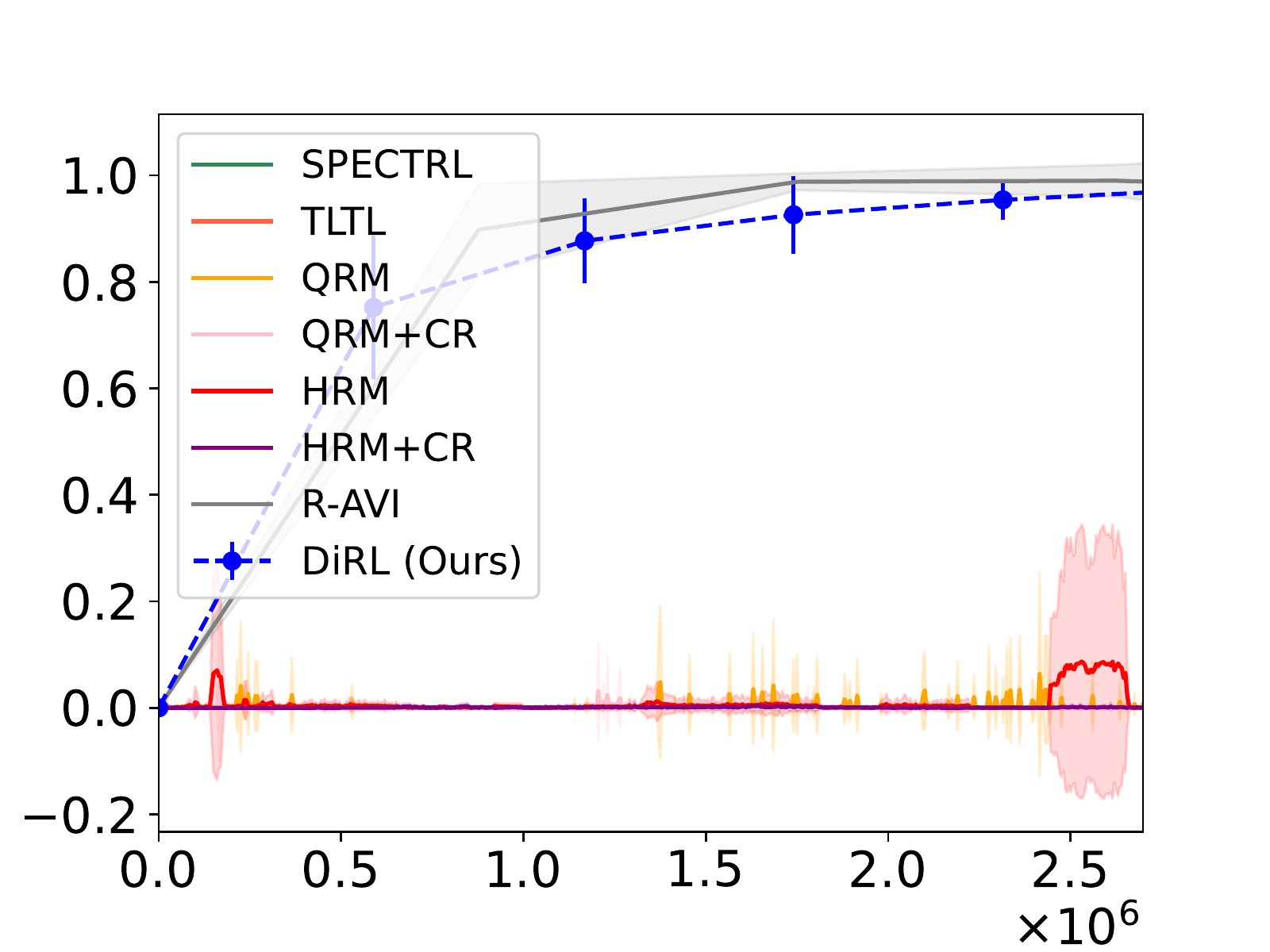}
\caption{Specification $\p_3$}
\label{fig:9rooms5}
\end{subfigure}
\hfill         
\begin{subfigure}{0.3\textwidth}
\centering
\includegraphics[width=\textwidth]{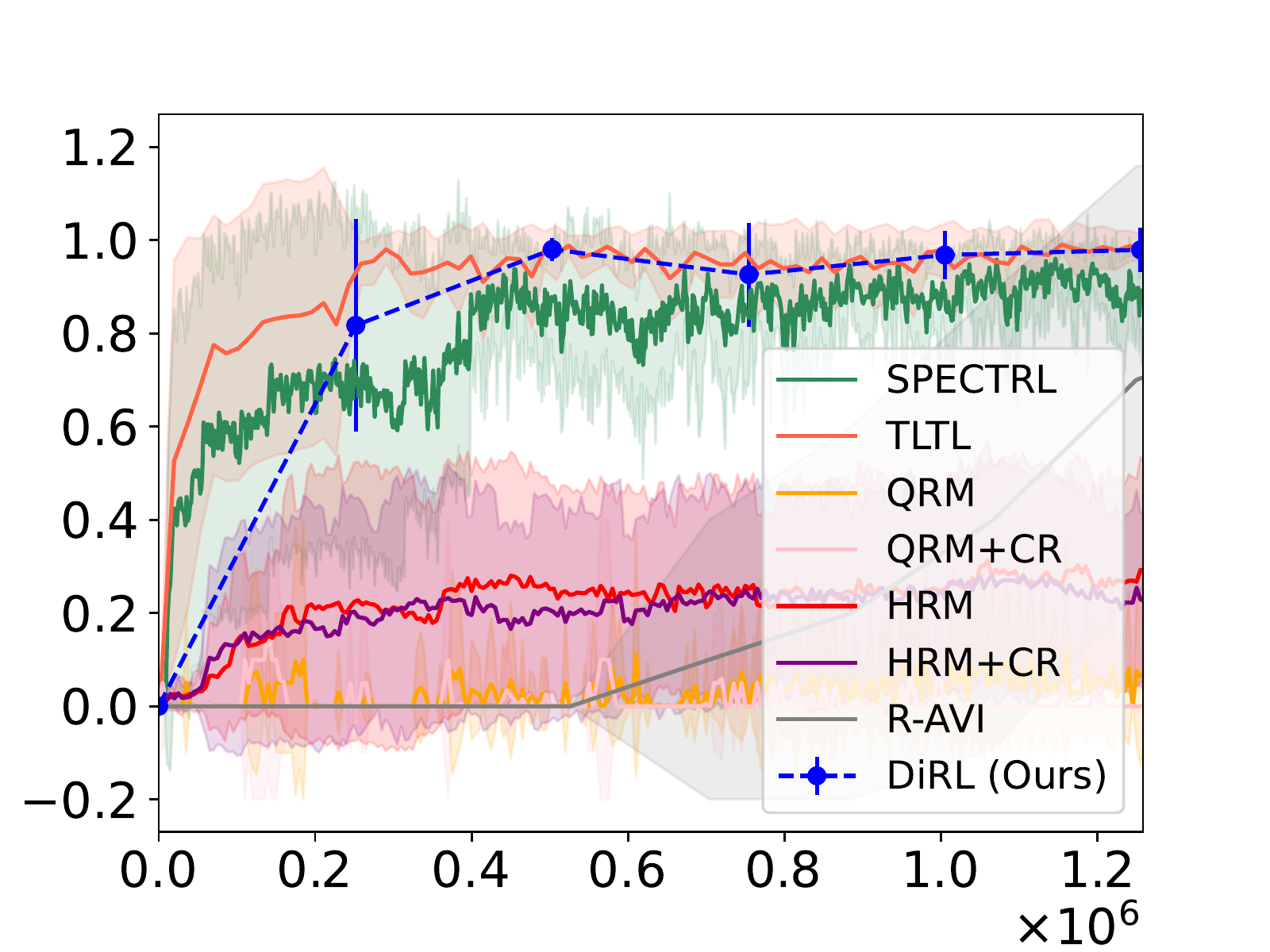}
\caption{Specification $\p_4$}
\label{fig:9rooms6}
\end{subfigure}
\begin{subfigure}{0.3\textwidth}
\centering
\includegraphics[width=\textwidth]{Plots/rooms9_spec7_0.pdf}
\caption{Specification $\p_5$}
\label{fig:9rooms7}
\end{subfigure}
\caption{Learning curves for 9-Rooms environment with different specifications. $x$-axis denotes the number of samples (steps) and $y$-axis denotes the estimated probability of success. Results are averaged over 10 runs with error bars indicating $\pm$ standard deviation.}
\label{Fig:9RoomsLC}
\end{figure*}

\textbf{Hyperparameters.}
The edge policies are learned using ARS \citep{mania2018simple} (version V2-t) with neural network policies and the following hyperparameters.
\begin{itemize}
\item Step-size $\alpha = 0.3$.
\item Standard deviation of exploration noise $\nu = 0.05$.
\item Number of directions sampled per iteration is $30$.
\item Number of top performing directions to use $b = 15$.
\end{itemize}
To plot the learning curve, we use values of
\begin{align*}
k \in \{3000, 6000, 12000, 18000, 24000, 30000\}
\end{align*}
where each episode consists of $m=20$ steps.

\textbf{Results.}
The learning curves for these specifications are shown in \autoref{Fig:9RoomsLC}.
While most tools perform reasonably well on specifications $\p_2$  (\autoref{fig:9rooms4}) and $\p_4$ (\autoref{fig:9rooms6}), the baselines are unable to learn to satisfy $\p_3$ (\autoref{fig:9rooms5}) and $\p_5$ (\autoref{fig:9rooms7}) except for \textsc{R-avi} which learns to satisfy $\p_3$ as well. 


\subsection{16-Rooms Environment}

\textbf{Specifications.}
We describe the five specifications used for the 16-rooms environment, which are designed to increase in difficulty. First, we define a \emph{segment} as the following specification: Given the current location of the agent, the goal is to reach a room diagonally opposite to it by visiting at least one of the rooms at the remaining two corners of the rectangle formed by the current room and the goal room---e.g., in the 9-Rooms environment, to visit $S_3$ from the initial room, the agent must visit either $S_1$ or $S_2$ first.

Then, we design specifications of varying sizes by sequencing several segments one after the other. In the first segment, the agent's current location is the initial room. In subsequent segments, the current location is the goal room of the previous segment. In addition, the agent must always avoid the obstacles in the environment. We create five such specifications, one half-segment and specifications up to four segments ($\p_1$ to $\p_5$), as illustrated in \autoref{fig:16roomsopendoors} and described below:

\begin{enumerate}
\item $\p_1$ corresponds to the {\em half-segment} enroute (2,2) from (0,0). Thus $\p_1$ is a choice between (0,2) and (2,0).
\item $\p_2$ is the first segment that goes from (0,0) to (2,2)
\item $\p_3$ augments $\p_2$ with a second segment to (3,1).
\item $\p_4$ augments $\p_3$ with a segment to (1,3)
\item $\p_5$ augments $\p_4$ with a segment to (0,1)
\end{enumerate}

We denote by $|\G_{\p}|$ the number of edges in the abstract graph corresponding to the specification $\p$.

\begin{figure*}[t]
\centering
\begin{subfigure}{0.3\textwidth}
\centering
\includegraphics[width=\textwidth]{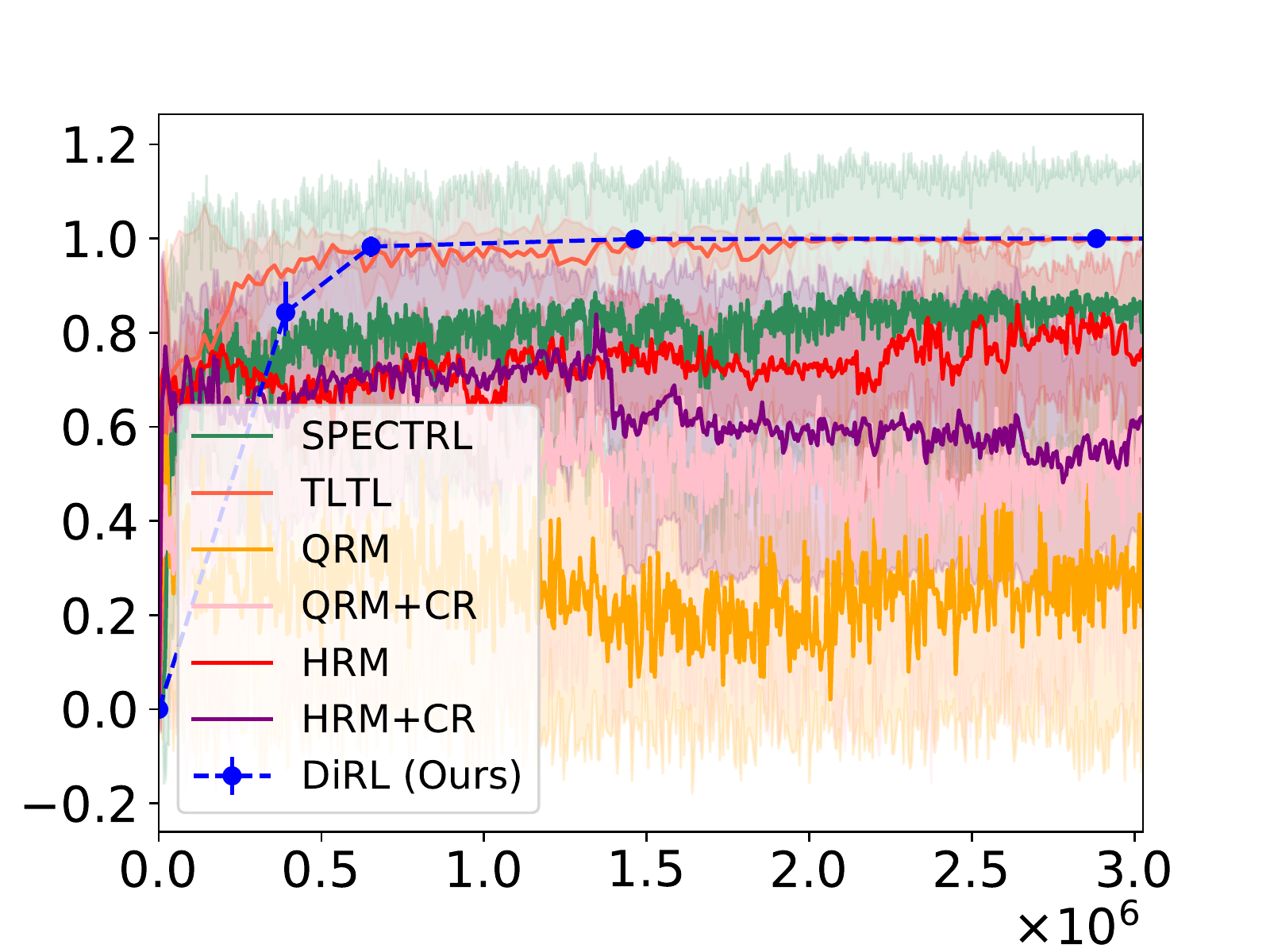}
\caption{Half-segment spec. $\p_1$, $|\G_{\p_1}| = 2$.}
\label{fig:16_4rooms9}
\end{subfigure}
\hfill
\begin{subfigure}{0.3\textwidth}
\centering
\includegraphics[width=\textwidth]{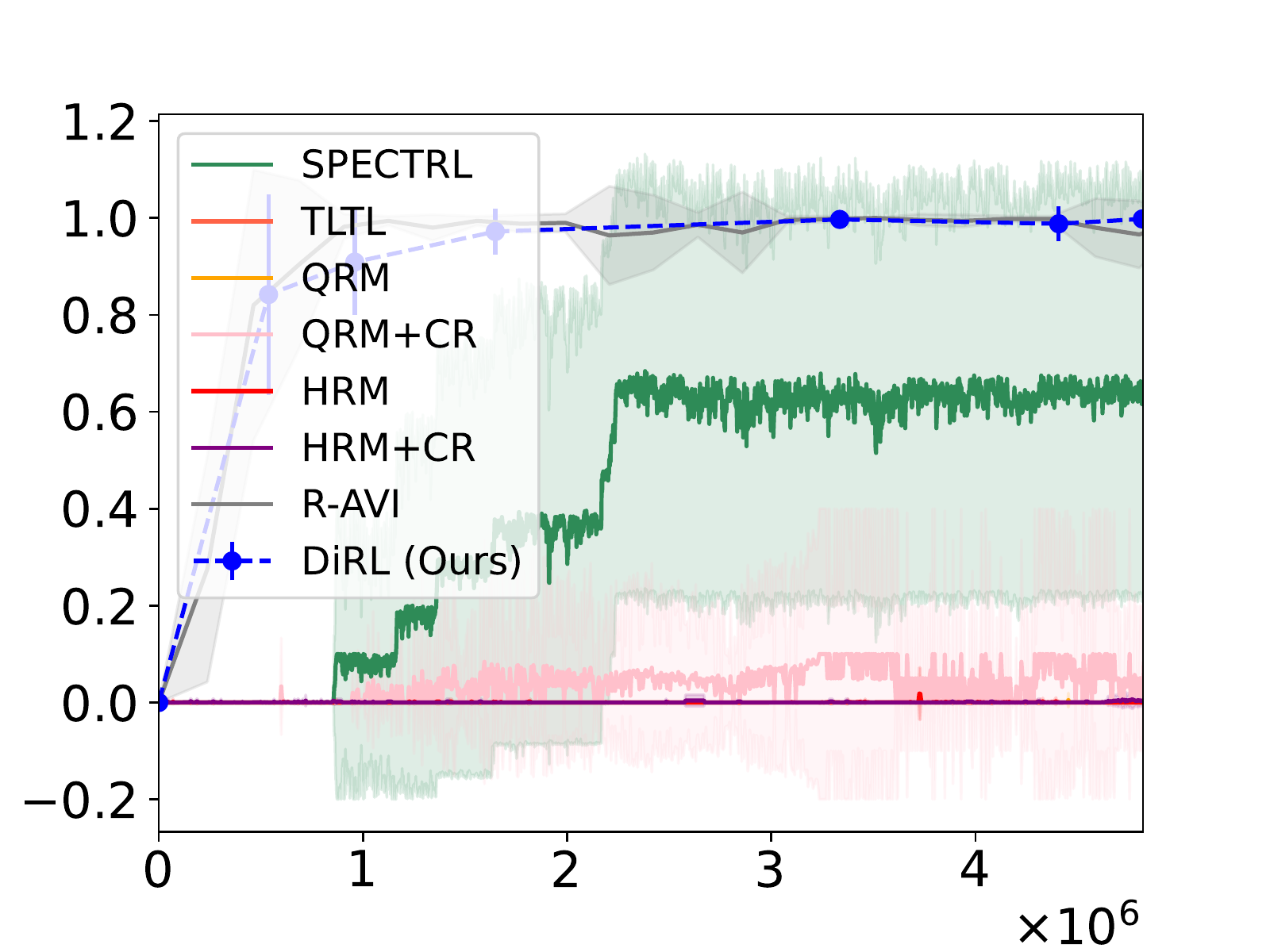}
\caption{1-segment spec. $\p_2$, $|\G_{\p_2}| = 4$.}
\label{fig:16_4rooms10}
\end{subfigure}
\hfill
\begin{subfigure}{0.3\textwidth}
\centering
\includegraphics[width=\textwidth]{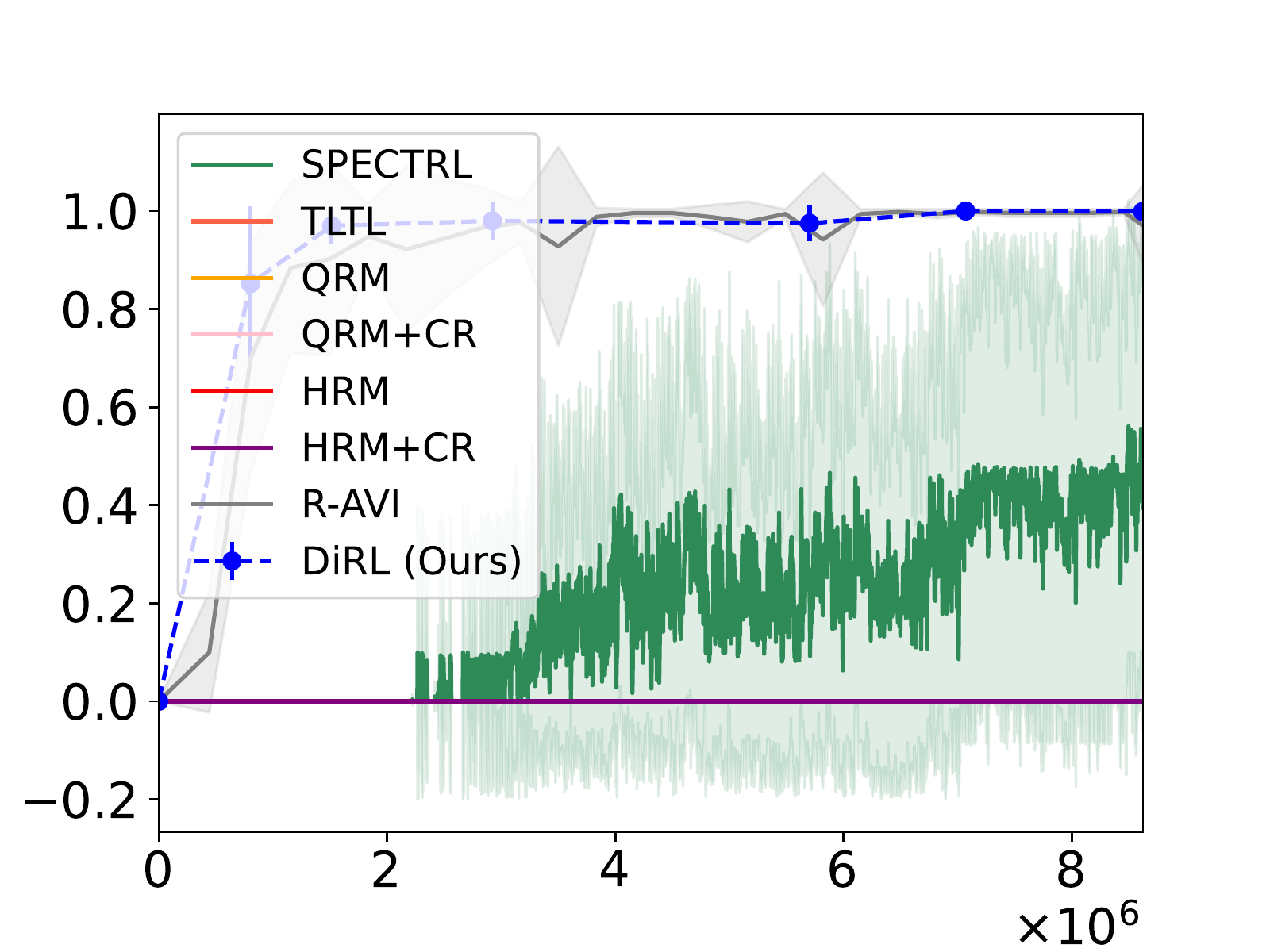}
\caption{2-segment spec. $\p_3$, $|\G_{\p_3}| = 8$.}
\label{fig:16_4rooms11}
\end{subfigure}
\hfill         
\begin{subfigure}{0.3\textwidth}
\centering
\includegraphics[width=\textwidth]{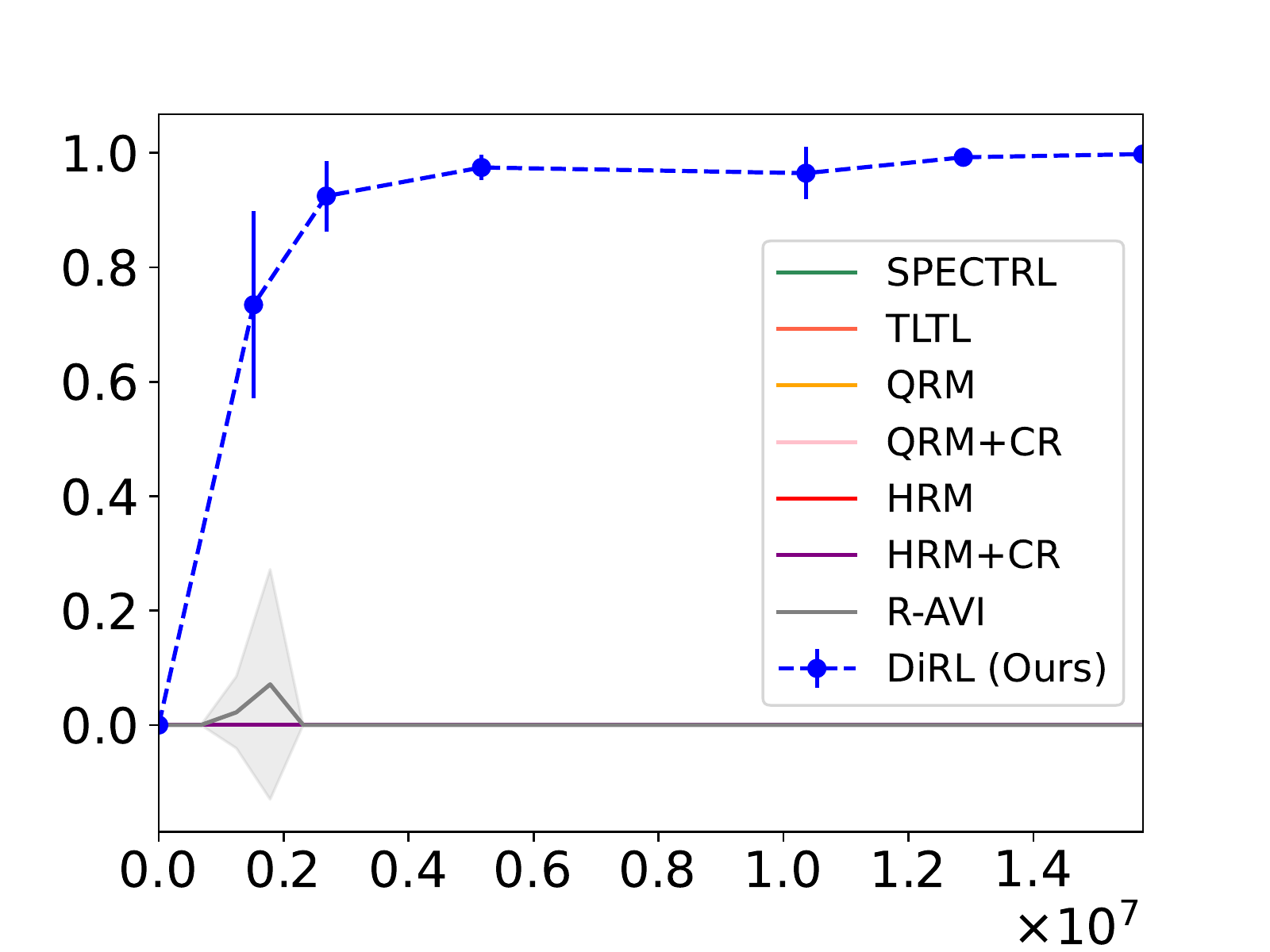}
\caption{3-segment spec. $\p_4$, $|\G_{\p_4}| = 12$.}
\label{fig:16_4rooms12}
\end{subfigure}
\hfill          
\begin{subfigure}{0.3\textwidth}
\centering
\includegraphics[width=\textwidth]{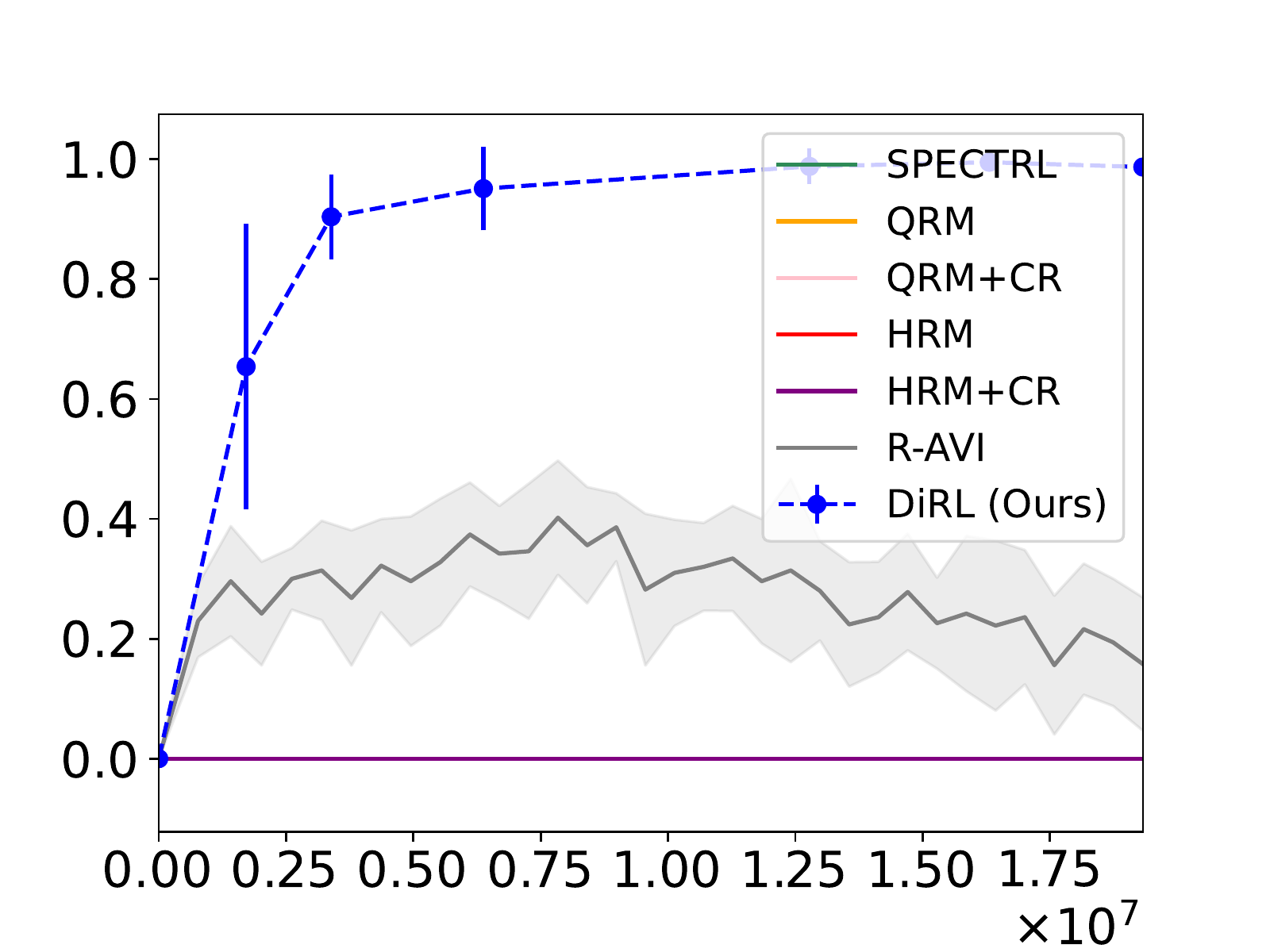}
\caption{4-segment spec. $\p_5$, $|\G_{\p_5}| = 16$.}
\label{fig:16_4rooms13}
\end{subfigure}
\hfill
\begin{subfigure}{0.3\textwidth}
\centering
\includegraphics[width=\textwidth]{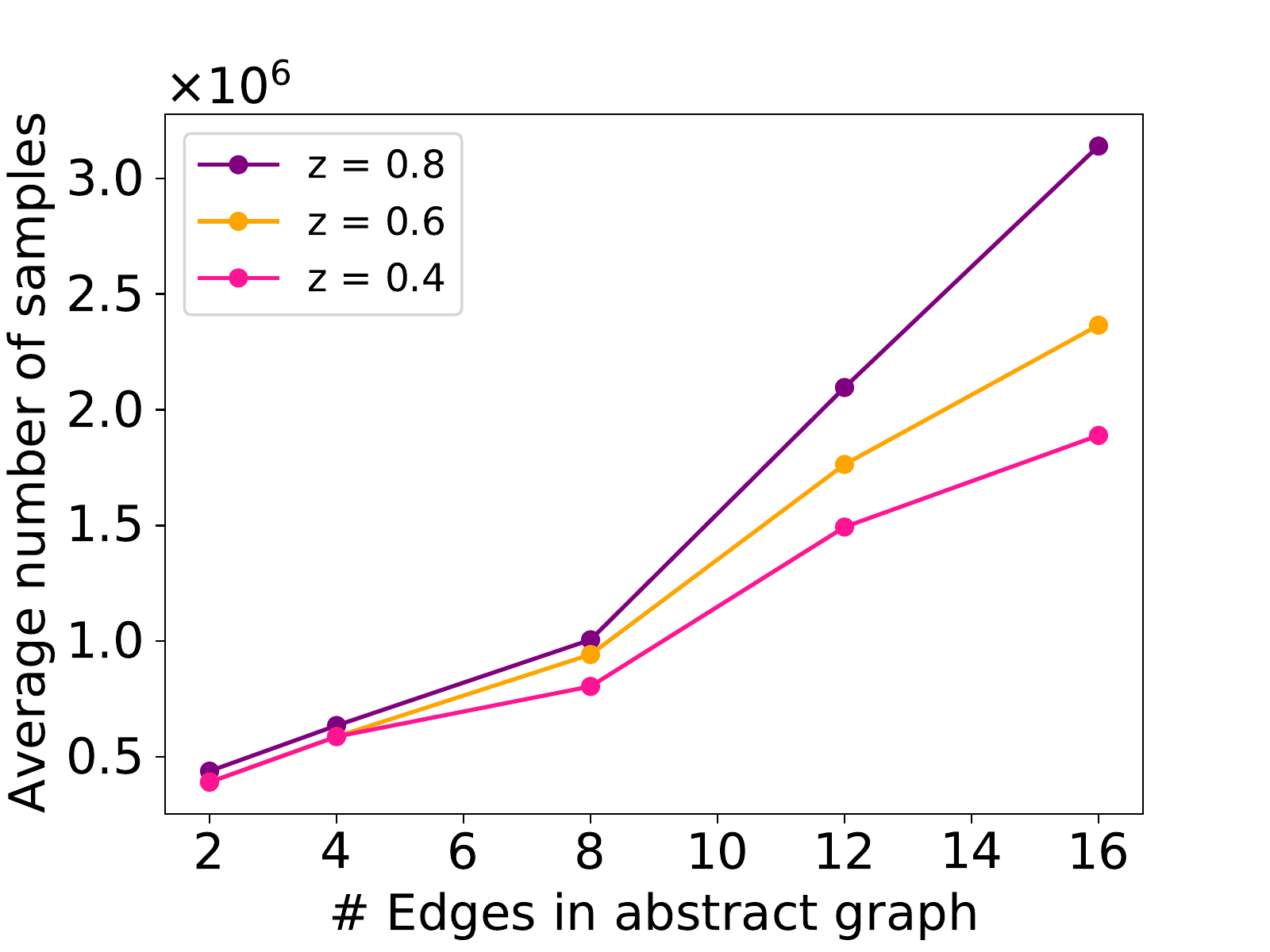}
\caption{Sample Complexity}
\label{fig:scalability16_4}
\end{subfigure}

\caption{(a)-(e) Learning curves for 16-Rooms environment with some blocked doors (\autoref{fig:16rooms}) with different specifications increasing in complexity from (a) to (e). $x$-axis denotes the number of samples (steps) and $y$-axis denotes the estimated probability of success.
Results are averaged over 10 runs with error bars indicating $\pm$ standard deviation. (f) shows the average number of samples (steps) needed to achieve a success probability $\geq z$ ($y$-axis) as a function of the size of the abstract graph $|\G_{\p}|$.}
\label{Fig:16_4Rooms}
\end{figure*}

\textbf{Hyperparameters.} We use the same hyperparameters of ARS as the ones used for the 9-Rooms environment.
We run experiments for
\begin{align*}
k\in\{6000, 12000, 24000, 48000, 60000, 72000\}.
\end{align*}

\textbf{Results.}
The learning curves for the environment with all open doors and the constrained environment with some open doors are shown in \autoref{Fig:16Rooms} and \autoref{Fig:16_4Rooms}, respectively.

\section{Case Study: Fetch Environment}
\label{Ap:FetchCaseStudy}
The fetch robotic arm from OpenAI Gym is visualized in Figure~\ref{fig:Fetch}. Let us denote by $s_r=(s_r^x, s_r^y, s_r^z)\in\R^3$ the position of the gripper, $s_o\in\R^3$ the relative position of the object (black block) w.r.t. the gripper, $s_g\in\R^3$ the goal location (red sphere) and $s_w\in\R$ the width of the gripper. Let $c$ denote the width of the object and $z_\epsilon = (0, 0, \epsilon+c)$ for $\epsilon>0$. Then, we define the following predicates.
\begin{figure}
    \centering
    \includegraphics[width=0.4\linewidth]{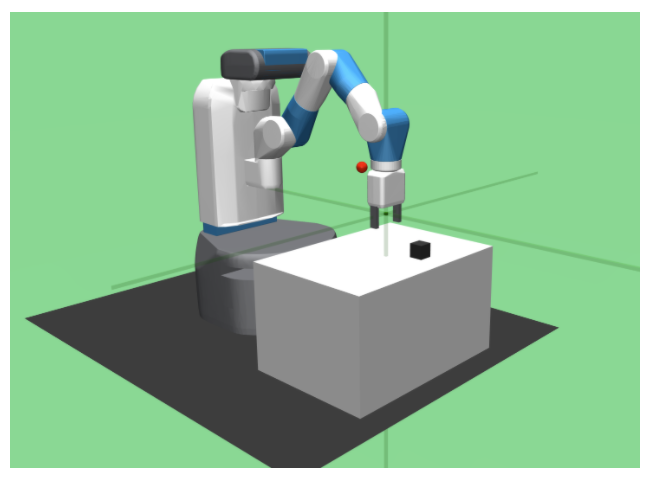}
    \caption{Fetch robotic arm.}
    \label{fig:Fetch}
\end{figure}
\begin{itemize}
\item \emph{NearObj} holds true in states in which the gripper is wide open, aligned with the object and is slightly above the object:
$$\text{NearObj}(s) = \big(\lVert s_o + z_\epsilon\rVert_2^2 + (s_w - 2c)^2 < \delta_1\big)$$
\item \emph{HoldingObj} holds true in states in which the gripper is close to the object and its width is close to the object's width:
$$\text{HoldingObj}(s) = \big(\lVert s_o \rVert_2^2 + (s_w-c)^2 < \delta_2\big)$$
\item \emph{LiftedObj} holds true in states in which the object is above the surface level of the table:
$$\text{LiftedObj}(s) = \big(s_r^z + s_o^z > \delta_3\big)$$
\item \emph{ObjAt}[$g$] holds true in states in which the object is close to $g$:
$$\text{ObjAt[$g$]}(s) = \big(\lVert s_r + s_o-g\rVert_2^2 < \delta_4\big)$$
\end{itemize}
Then the specifications we use are the following.\footnote{We denote $\eventually{b}$ using just the predicate $b$.}
\begin{itemize}[topsep=0pt,itemsep=0ex,partopsep=1ex,parsep=1ex]
\item PickAndPlace: $\p_1=$ NearObj; HoldingObj; LiftedObj; ObjAt[$s_g$].
\item PickAndPlaceStatic: NearObj; HoldingObj; LiftedObj; ObjAt[$g_1$] where $g_1$ is a fixed goal.
\item PickAndPlaceChoice: \big(NearObj; HoldingObj; LiftedObj\big); \big((ObjAt[$g_1$]; ObjAt[$g_2$]) \code{or} (ObjAt[$g_3$]; ObjAt[$g_4$])\big).
\end{itemize}

\textbf{Hyperparameters.} We use TD3 \citep{fujimoto2018addressing} for learning edge policies with the following hyperparameters.
\begin{itemize}
\item Discount $\gamma=0.95$. 
\item Adam optimizer; actor learning rate $0.0001$; critic learning rate $0.001$.
\item Soft update targets $\tau=0.005$.
\item Replay buffer of size $200000$.
\item $100$ training steps performed every $100$ environment steps.
\item A minibatch of 256 steps used per training step.
\item Exploration using gaussian noise with $\sigma=0.15$.
\end{itemize}
We run experiments for $k \in \{1000, 2000, 4000\}$ and each episode consists of $m=40$ steps.

\end{document}